\newcommand{\R}{\mathbb{R}}
\newcommand{\I}{\mathrm{I}}
\DeclareMathOperator{\diag}{diag}
\theoremstyle{plain}
\newtheorem{theorem}{Theorem}[section]
\newtheorem{lemma}[theorem]{Lemma}
\newtheorem{proposition}[theorem]{Proposition}
\newtheorem*{corollary}{Corollary}
\theoremstyle{definition}
\newtheorem{assumption}{Assumption}
\newcommand{\bbm}{\begin{bmatrix}}
\newcommand{\ebm}{\end{bmatrix}}
\newcommand{\bi}{\begin{itemize}}
\newcommand{\ei}{\end{itemize}}
\newcommand{\ben}{\begin{enumerate}}
\newcommand{\een}{\end{enumerate}}
\newcommand{\bx}{\mathbf x}            
\newcommand{\by}{\mathbf y}            
\newcounter{alg}
\newcommand{\alg}[1]{\refstepcounter{alg} \vspace{.2in} \needspace{5 \baselineskip}
\hrule \vspace{.1in} \noindent Algorithm \thealg : {#1} \vspace{.1in} \hrule \vspace{.2in} \noindent}
\newcommand{\comments}[1]{}
\newenvironment{changemargin}[2]{%
\list{}{\rightmargin#2\leftmargin#1
\parsep=0pt\topsep=0pt\partopsep=0pt}
\item[]}
{\endlist}
\begin{document}

\title{
{\bf A Method for Finding Structured Sparse Solutions\\ to Non-negative Least Squares Problems \\with Applications}}
\renewcommand{\baselinestretch}{1.0}
\date{}
\maketitle

\thispagestyle{empty}
{\large \bf \begin{center}
\hspace{.1 in}\\
\hspace{.1 in}\\
Ernie Esser$^1$, \ \ Yifei Lou$^1$, \ \ Jack Xin\footnote{
Department of Mathematics, UC Irvine, Irvine, CA 92697. Author emails: eesser@uci.edu, \\
louyifei@gmail.com, jxin@math.uci.edu.
The work was partially supported by NSF grants DMS-0911277, DMS-0928427, and DMS-1222507.}
\end{center}}
\bigskip

\begin{abstract}
Demixing problems in many areas such as hyperspectral imaging and differential optical absorption spectroscopy (DOAS) often require finding sparse nonnegative linear combinations of dictionary elements that match observed data.  We show how aspects of these problems, such as misalignment of DOAS references and uncertainty in hyperspectral endmembers, can be modeled by expanding the dictionary with grouped elements and imposing a structured sparsity assumption that the combinations within each group should be sparse or even 1-sparse.  If the dictionary is highly coherent, it is difficult to obtain good solutions using convex or greedy methods, such as non-negative least squares (NNLS) or orthogonal matching pursuit.  We use penalties related to the Hoyer measure, which is the ratio of the $l_1$ and $l_2$ norms, as sparsity penalties to be added to the objective in NNLS-type models.  For solving the resulting nonconvex models, we propose a scaled gradient projection algorithm that requires solving a sequence of strongly convex quadratic programs.  We discuss its close connections to convex splitting methods and difference of convex programming.  We also present promising numerical results for example DOAS analysis and hyperspectral demixing problems.

\comments{
\begin{abstract}
Demixing problems in many areas such as hyperspectral imaging and differential optical absorption spectroscopy (DOAS) often require finding sparse nonnegative linear combinations of dictionary elements that match observed data.  We show how aspects of these problems, such as misalignment of DOAS references and uncertainty in hyperspectral endmembers, can be modeled by expanding the dictionary to one with a group structure and imposing a structured sparsity assumption that the coefficients for each group should be sparse or even 1-sparse.  If the dictionary is highly coherent, it is difficult to obtain good solutions using convex or greedy methods, such as non-negative least squares (NNLS) or orthogonal matching pursuit (OMP).  We use penalties related to the Hoyer measure, which is the ratio of the L1 and L2 norms, as sparsity penalties to be added to the objective in NNLS-type models.  For solving the resulting nonconvex models, we propose a simple scaled gradient projection algorithm that requires solving a sequence of strongly convex quadratic programs.  We discuss its close connections to convex splitting methods and difference of convex programming.  We also present promising numerical results for example DOAS analysis and hyperspectral demixing problems.
}

\comments{
Demixing problems in many areas such as hyperspectral imaging and differential optical absorption spectroscopy (DOAS) often require finding sparse nonnegative linear combinations of dictionary elements that match observed data.  We show how aspects of these problems, such as misalignment of DOAS references and uncertainty in hyperspectral endmembers, can be modeled by expanding the dictionary to one with a group structure and imposing a structured sparsity assumption that the coefficients for each group should be sparse or even 1-sparse.  If the dictionary is highly coherent, it is difficult to obtain good solutions using convex or greedy methods, such as non-negative least squares (NNLS) or orthogonal matching pursuit (OMP).  Motivated by the sparsity promoting properties of the Hoyer measure, which is the ratio of $l_1$ and $l_2$ norms, we use the difference of $l_1$ and $l_2$ norms as a sparsity penalty to be added to the objective in NNLS-type models.  For solving the resulting nonconvex models, we propose a simple scaled gradient projection algorithm that requires solving a sequence of strongly convex quadratic programs.  We discuss its close connections to convex splitting methods and difference of convex programming.  We also present promising numerical results for example DOAS analysis and hyperspectral demixing problems.
}

\comments{
Demixing problems in many areas such as hyperspectral imaging and differential optical absorption spectroscopy (DOAS) often require finding structured sparse nonnegative linear combinations of dictionary elements that match observed data. When the dictionary has a group structure, we are interested in a structured sparsity assumption that the coefficients within each group should be sparse or even 1-sparse. If the dictionary is highly coherent, it is difficult to obtain good solutions using convex or greedy methods, such as non-negative least squares (NNLS) or orthogonal matching pursuit (OMP). We use the Hoyer measure, which is the ratio of the L1 and L2 norms, as a sparsity penalty to be added to the objective in NNLS-type models. Although similar nonconvex models have been proposed before, we propose a new way of solving them via an iterative convex splitting method, which turns out to be an instance of scaled gradient projection.  The algorithm requires solving a sequence of strongly convex quadratic programs. This method is simple to implement, and limit points of the sequence of iterates satisfy the optimality conditions for the non-convex model.  We show promising numerical results for DOAS analysis and hyperspectral demixing problems.
}


\end{abstract} 
\newpage

\section{Introduction \label{introduction}}

A general demixing problem is to estimate the quantities or concentrations of the individual components of some observed mixture.  Often a linear mixture model is assumed \cite{KM}.  In this case the observed mixture $b$ is modeled as a linear combination of references for each component known to possibly be in the mixture.  If we put these references in the columns of a dictionary matrix $A$, then the mixing model is simply $Ax = b$.  Physical constraints often mean that $x$ should be nonnegative, and depending on the application we may also be able to make sparsity assumptions about the unknown coefficients $x$.  This can be posed as a basis pursuit problem where we are interested in finding a sparse and perhaps also non-negative linear combination of dictionary elements that match observed data.  This is a very well studied problem.  Some standard convex models are non-negative least squares (NNLS) \cite{LH,SH},
\begin{equation}\label{eq:NNLS} \min_{x \geq 0} \frac{1}{2}\|Ax-b\|^2 \end{equation}
and methods based on $l_1$ minimization \cite{CDS,Tibshirani,YOGD}.  There are also variations that enforce different group sparsity assumptions on $x$ \cite{MGB,JAB,QG}.

In this paper we are interested in how to deal with uncertainty in the dictionary. The case when the dictionary is unknown is dealt with in sparse coding and non-negative matrix factorization (NMF) problems \cite{Pauca06,MBPS,H2002,LS,BBLPP,EMOSX}, which require learning both the dictionary and a sparse representation of the data.  We are, however, interested in the case where we know the dictionary but are uncertain about each element.  One example we will study in this paper is differential optical absorption spectroscopy (DOAS) analysis \cite{PS}, for which we know the reference spectra but are uncertain about how to align them with the data because of wavelength misalignment.  Another example we will consider is hyperspectral unmixing \cite{unmixing,Greer10,GWO}.  Multiple reference spectral signatures, or endmembers, may have been measured for the same material, and they may all be slightly different if they were measured under different conditions.  We may not know ahead of time which one to choose that is most consistent with the measured data.  Although there is previous work that considers noise in the endmembers \cite{HLS} and represents endmembers as random vectors \cite{ZG}, we may not always have a good general model for endmember variability.  For the DOAS example, we do have a good model for the unknown misalignment \cite{PS}, but even so, incorporating it may significantly complicate the overall model.  Therefore for both examples, instead of attempting to model the uncertainty, we propose to expand the dictionary to include a representative group of possible elements for each uncertain element as was done in \cite{louBS11}.

The grouped structure of the expanded dictionary is known by construction, and this allows us to make additional structured sparsity assumptions about the corresponding coefficients.  In particular, the coefficients should be extremely sparse within each group of representative elements, and in many cases we would like them to be at most $1$-sparse.  We will refer to this as intra group sparsity.  If we expected sparsity of the coefficients for the unexpanded dictionary, then this will carry over to an inter group sparsity assumption about the coefficients for the expanded dictionary.  By inter group sparsity we mean that with the coefficients split into groups, the number of groups containing nonzero elements should also be sparse.  Modeling structured sparsity by applying sparsity penalties separately to overlapping subsets of the variables has been considered in a much more general setting in \cite{JAB,JOB}.

The expanded dictionary is usually an underdetermined matrix with the property that it is highly coherent because the added columns tend to be similar to each other.  This makes it very challenging to find good sparse representations of the data using standard convex minimization and greedy optimization methods.  If $A$ satisfies certain properties related to its columns not being too coherent \cite{BEZ}, then sufficiently sparse non-negative solutions are unique and can therefore be found by solving the convex NNLS problem.  These assumptions are usually not satisfied for our expanded dictionaries, and while NNLS may still be useful as an initialization, it does not by itself produce sufficiently sparse solutions. Similarly, our expanded dictionaries usually do not satisfy the incoherence assumptions required for $l_1$ minimization or greedy methods like Orthogonal Matching Pursuit (OMP) to recover the $l_0$ sparse solution \cite{Tropp,CRT}.  However, with an unexpanded dictionary having relatively few columns, these techniques can be effectively used for sparse hyperspectral unmixing \cite{IBP}.

The coherence of our expanded dictionary means we need to use different tools to find good solutions that satisfy our sparsity assumptions.  We would like to use a variational approach as similar as possible to the NNLS model that enforces the additional sparsity while still allowing all the groups to collaborate.  We propose adding nonconvex sparsity penalties to the NNLS objective function (\ref{eq:NNLS}).  We can apply these penalties separately to each group of coefficients to enforce intra group sparsity, and we can simultaneously apply them to the vector of all coefficients to enforce additional inter group sparsity.  From a modeling perspective, the ideal sparsity penalty is $l_0$.  There is a very interesting recent work that directly deals with $l_0$ constraints and penalties via a quadratic penalty approach \cite{luZ12}.  If the variational model is going to be nonconvex, we prefer to work with a differentiable objective when possible.  We therefore explore the effectiveness of sparsity penalties based on the Hoyer measure \cite{H2004,HR}, which is essentially the ratio of $l_1$ and $l_2$ norms.  In previous works, this has been successfully used to model sparsity in NMF and blind deconvolution applications \cite{H2004,KTF,JLSW}.  We also consider the difference of $l_1$ and $l_2$ norms.  By the relationship, $\|x\|_1 - \|x\|_2 = \|x\|_2 (\frac{\|x\|_1}{\|x\|_2} - 1)$, we see that while the ratio of norms is constant in radial directions, the difference increases moving away from the origin except along the axes.  Since the Hoyer measure is twice differentiable on the non-negative orthant away from the origin, it can be locally expressed as a difference of convex functions, and convex splitting or difference of convex (DC) methods \cite{TA} can be used to find a local minimum of the nonconvex problem.  Some care must be taken, however, to deal with its poor behavior near the origin.  It is even easier to apply DC methods when using $l_1$ - $l_2$ as a penalty, since this is already a difference of convex functions, and it is well defined at the origin.

The paper is organized as follows.  In Section \ref{problem} we define the general model, describe the dictionary structure and show how to use both the ratio and the difference of $l_1$ and $l_2$ norms to model our intra and inter group sparsity assumptions.  Section \ref{algorithm} derives a method for solving the general model, discusses connections to existing methods and includes convergence analysis.  In Section \ref{applications} we discuss specific problem formulations for several examples related to DOAS analysis and hyperspectral demixing.
Numerical experiments for comparing methods and applications to example problems are presented in Section \ref{numerics}.



\section{Problem \label{problem}}

For the non-negative linear mixing model $Ax=b$, let $b \in \R^W$, $A \in R^{W \times N}$ and $x \in \R^N$ with $x \geq 0$.  Let the dictionary $A$ have $l_2$ normalized columns and consist of $M$ groups, each with $m_j$ elements.  We can write $A = \bbm A_1 & \cdots & A_M \ebm$ and $x = {\bbm x_1 & \cdots & x_M \ebm}^T$, where each $x_j \in \R^{m_j}$ and $N = \sum_{j=1}^M m_j$.  The general non-negative least squares problem with sparsity constraints that we will consider is
\begin{equation}\label{eq:generalprob}
\min_{x \geq 0} F(x) := \frac{1}{2}\|Ax-b\|^2 + R(x) \ , \end{equation}
where
\begin{equation}\label{eq:R} R(x) = \sum_{j=1}^M \gamma_j R_j(x_j) + \gamma_0 R_0(x) \ . \end{equation}
The functions $R_j$ represent the intra group sparsity penalties applied to each group of coefficients $x_j$, $j = 1,...,M$, and $R_0$ is the inter group sparsity penalty applied to $x$.  If $F$ is differentiable, then a necessary condition for $x^*$ to be a local minimum is given by
\begin{equation}\label{eq:genopt}
(y - x^*)^T \nabla F(x^*) \geq 0 \qquad \forall y \geq 0 \ .
\end{equation}

For the applications we will consider, we want to constrain each vector $x_j$ to be at most 1-sparse, which is to say that we want $\|x_j\|_0 \leq 1$.  To accomplish this through the model (\ref{eq:generalprob}), we will need to choose the parameters $\gamma_j$ to be sufficiently large.

The sparsity penalties $R_j$ and $R_0$ will either be the ratios of $l_1$ and $l_2$ norms defined by
\begin{equation}\label{H} H_j(x_j) = \gamma_j \frac{\|x_j\|_1}{\|x_j\|_2} \qquad \text{and} \qquad H_0(x) = \gamma_0 \frac{\|x\|_1}{\|x\|_2} \ , \end{equation}
or they will be the differences defined by
\begin{equation}\label{S} S_j(x_j) = \gamma_j (\|x_j\|_1 - \|x_j\|_2) \qquad \text{and} \qquad S_0(x) = \gamma_0 (\|x\|_1 - \|x\|_2) \ . \end{equation}
A geometric intuition for why minimizing $\frac{\|x\|_1}{\|x\|_2}$ promotes sparsity of $x$ is that since it is constant in radial directions, minimizing it tries to reduce $\|x\|_1$ without changing $\|x\|_2$.  As seen in Figure \ref{unitballs}, sparser vectors have smaller $l_1$ norm on the $l_2$ sphere.
\begin{figure}
\begin{center}
\epsfig{file=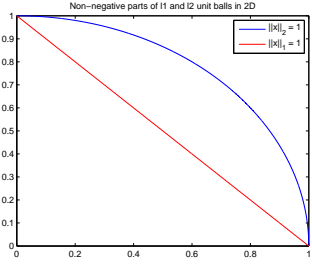,width=5cm,clip=}
\end{center}
\caption{$l_1$ and $l_2$ unit balls \label{unitballs}}
\end{figure}

Neither $H_j$ or $S_j$ is differentiable at zero, and $H_j$ is not even continuous there.  Figure \ref{fig:Rl1l2} shows a visualization of both penalties in two dimensions.
\begin{figure}
\begin{center}
\epsfig{file=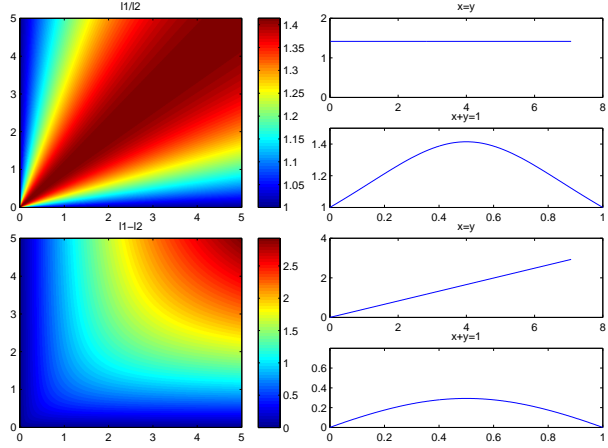,width=10cm,clip=}
\end{center}
\caption{Visualization of $l_1$/$l_2$ and $l_1$ - $l_2$ penalties \label{fig:Rl1l2}}
\end{figure}
To obtain a differentiable $F$, we can smooth the sparsity penalties by replacing the $l_2$ norm with the Huber function, defined by the infimal convolution
\begin{equation}\label{eq:huber}
\phi(x,\epsilon) = \inf_y \|y\|_2 + \frac{1}{2 \epsilon}\|y - x\|^2 = \begin{cases} \frac{\|x\|_2^2}{2 \epsilon} & \text{if } \ \|x\|_2 \leq \epsilon \\ \|x\|_2 - \frac{\epsilon}{2} & \text{otherwise}\ . \end{cases}
\end{equation}
In this way we can define differentiable versions of sparsity penalties $H$ and $S$ by
\begin{align}
H_j^{\epsilon_j}(x_j) & = \gamma_j \frac{\|x_j\|_1}{\phi(x_j,\epsilon_j)+\frac{\epsilon_j}{2}} \label{Hep} \\
\notag H_0^{\epsilon}(x) & = \gamma_0 \frac{\|x\|_1}{\phi(x,\epsilon_0)+\frac{\epsilon_0}{2}} \end{align}
\begin{align}
S_j^{\epsilon}(x_j) & = \gamma_j (\|x_j\|_1 - \phi(x_j,\epsilon_j)) \label{Sep}\\
\notag S_0^{\epsilon}(x) & = \gamma_0 (\|x\|_1 - \phi(x,\epsilon_0))
\end{align}
These smoothed sparsity penalties are shown in Figure \ref{fig:Rl1l2reg}.
\begin{figure}
\begin{center}
\epsfig{file=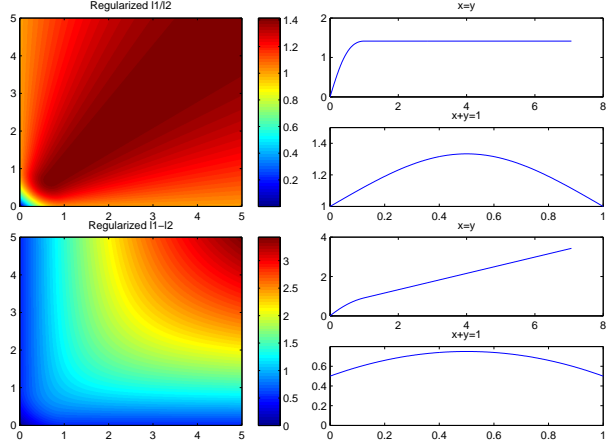,width=10cm,clip=}
\end{center}
\caption{Visualization of regularized $l_1$/$l_2$ and $l_1$ - $l_2$ penalties \label{fig:Rl1l2reg}}
\end{figure}
The regularized penalties behave more like $l_1$ near the origin and should tend to shrink $x_j$ that have small $l_2$ norms.

An alternate strategy for obtaining a differentiable objective that doesn't require smoothing the sparsity penalties is to add $M$ additional dummy variables and modify the convex constraint set.  Let $d \in \R^M$, $d \geq 0$ denote a vector of dummy variables.  Consider applying $R_j$ to vectors $\bbm x_j \\ d_j \ebm$ instead of to $x_j$.  Then if we add the constraints $\|x_j\|_1 + d_j \geq \epsilon_j$, we are assured that $R_j(x_j,d_j)$ will only be applied to nonzero vectors, even though $x_j$ is still allowed to be zero.  Moreover, by requiring that $\sum_j \frac{d_j}{\epsilon_j} \leq M - r$, we can ensure that at least $r$ of the vectors $x_j$ have one or more nonzero elements.  In particular, this prevents $x$ from being zero, so $R_0(x)$ is well defined as well.

The dummy variable strategy is our preferred approach for using the $l_1$/$l_2$ penalty.  The high variability of the regularized version near the origin creates numerical difficulties.  It either needs a lot of smoothing, which makes it behave too much like $l_1$, or its steepness near the origin makes it harder numerically to avoid getting stuck in bad local minima.  For the $l_1$ - $l_2$ penalty, the regularized approach is our preferred strategy because it is simpler and not much regularization is required.  Smoothing also makes this penalty behave more like $l_1$ near the origin, but a small shrinkage effect there may in fact be useful, especially for promoting inter group sparsity.  These two main problem formulations are summarized below as Problem 1 and Problem 2 respectively.

\noindent \textbf{Problem 1: }
\begin{align*}\label{Problem1}
\min_{x,d} F_H(x,d) & := \frac{1}{2}\|Ax-b\|^2 + \sum_{j=1}^M \gamma_j H_j(x_j,d_j) + \gamma_0 H_0(x) \\
& \notag \text{such that } x>0, d>0, \sum_{j=1}^M \frac{d_j}{\epsilon_j} \leq M - r \text{ and } \|x_j\|_1+d_j \geq \epsilon_j , \ j = 1,...,M \ .
\end{align*}

\noindent \textbf{Problem 2: }
\begin{equation*}\label{Problem2}
\min_{x \geq 0} F_S(x) := \frac{1}{2}\|Ax-b\|^2 + \sum_{j=1}^M \gamma_j S_j^{\epsilon}(x_j) + \gamma_0 S_0^{\epsilon}(x) \ . \end{equation*}


\section{Algorithm \label{algorithm}}
Both Problems 1 and 2 from Section \ref{problem} can be written abstractly as
\begin{equation}\label{eq:generalprobX} \min_{x \in X} F(x) := \frac{1}{2}\|Ax-b\|^2 + R(x) , \end{equation}
where $X$ is a convex set.  Problem 2 is already of this form with $X = \{x \in \R^N : x \geq 0 \}$.  Problem 1 is also of this form, with $X = \{ x \in \R^N, d \in \R^M : x>0, d>0, \|x_j\|_1+d_j \geq \epsilon_j, \sum_j \frac{d_j}{\epsilon_j} \leq M - r \}$.  Note that the objective function of Problem 1 can also be written as in (\ref{eq:generalprobX}) if we redefine $x_j$ as $\bbm x_j \\ d_j \ebm$ and consider an expanded vector of coefficients $x \in \R^{N+M}$ that includes the $M$ dummy variables, $d$.  The data fidelity term can still be written as $\frac{1}{2}\|Ax-b\|^2$ if columns of zeros are inserted into $A$ at the indices corresponding to the dummy variables.  In this section, we will describe algorithms and convergence analysis for solving (\ref{eq:generalprobX}) under either of two sets of assumptions.
\begin{assumption}\label{C2}
\hfill
\bi
\item $X$ is a convex set.
\item $R(x) \in \mathcal{C}^2(X,\R)$ and the eigenvalues of $\nabla^2 R(x)$ are bounded on $X$.
\item $F$ is coercive on $X$ in the sense that for any $x^0 \in X$, $\{x \in X : F(x) \leq F(x^0) \}$ is a bounded set.  In particular, $F$ is bounded below.
\ei
\end{assumption}
\begin{assumption}\label{C1concave}
\hfill
\bi
\item $R(x)$ is concave and differentiable on $X$.
\item Same assumptions on $X$ and $F$ as in Assumption \ref{C2}
\ei
\end{assumption}

Problem 1 satisfies Assumption 1 and Problem 2 satisfies Assumption 2.  We will first consider the case of Assumption 1.

Our approach for solving (\ref{eq:generalprobX}) was originally motivated by a convex splitting technique from \cite{E1998,VR} that is a semi-implicit method for solving $\frac{dx}{dt} = -\nabla F(x), \ x(0) = x^0$ when $F$ can be split into a sum of convex and concave functions $F^C(x) + F^E(x)$, both in $\mathcal{C}^2(\R^N,\R)$.  Let $\lambda^E_{\text{max}}$ be an upper bound on the eigenvalues of $\nabla^2 F^E$, and let $\lambda_{\text{min}}$ be a lower bound on the eigenvalues of $\nabla^2 F$.  Under the assumption that $\lambda^E_{\text{max}} \leq \frac{1}{2}\lambda_{\text{min}}$ it can be shown that the update defined by
\begin{equation} x^{n+1} = x^n + \Delta t (-\nabla F^C(x^{n+1}) - \nabla F^E(x^n)) \label{ugs_update} \end{equation}
doesn't increase $F$ for any time step $\Delta t > 0$.  This can be seen by using second order Taylor expansions to derive the estimate
\begin{equation} F(x^{n+1}) - F(x^n) \leq (\lambda^E_{\text{max}} - \frac{1}{2}\lambda^C_{\text{min}} - \frac{1}{\Delta t})\|x^{n+1} - x^n\|^2 . \label{ugs} \end{equation}
This convex splitting approach has been shown to be an efficient method much faster than gradient descent for solving phase-field models such as the Cahn-Hilliard equation, which has been used for example to simulate coarsening \cite{VR} and for image inpainting \cite{BEG}.

By the assumptions on $R$, we can achieve a convex concave splitting, $F = F^C + F^E$, by letting $F^C(x) = \frac{1}{2}\|Ax-b\|^2 + \|x\|_C^2$ and $F^E(x) = R(x) - \|x\|_C^2$ for an appropriately chosen positive definite matrix $C$.  We can also use the fact that $F^C(x)$ is quadratic to improve upon the estimate in (\ref{ugs}) when bounding $F(x^{n+1})-F(x^n)$ by a quadratic function of $x^{n+1}$.  Then instead of choosing a time step and updating according to (\ref{ugs_update}), we can dispense with the time step interpretation altogether and choose an update that reduces the upper bound on $F(x^{n+1})-F(x^n)$ as much as possible subject to the constraint.  This requires minimizing a strongly convex quadratic function over $X$.

\begin{proposition}\label{prop:estimate} Let Assumption \ref{C2} hold.  Also let $\lambda_r$ and $\lambda_R$ be lower and upper bounds respectively on the eigenvalues of $\nabla^2 R(x)$ for $x \in X$.  Then for $x,y \in X$ and for any matrix $C$,
\begin{equation}\label{estimate} F(y) - F(x) \leq (y-x)^T((\lambda_R-\frac{1}{2}\lambda_r)\I - C)(y-x) + (y-x)^T(\frac{1}{2}A^TA + C)(y-x) + (y-x)^T\nabla F(x) \ . \end{equation}
\end{proposition}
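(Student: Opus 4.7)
The plan is to peel off the two summands of $F$, handle the quadratic data-fidelity term exactly, and use Taylor's theorem twice on $R$ — once in the ``reverse'' direction so that both the upper and lower Hessian bounds get exploited. Because $\tfrac{1}{2}\|Ax-b\|^2$ is quadratic, its second-order Taylor expansion at $x$ is exact and contributes precisely $(y-x)^T A^T(Ax-b) + \tfrac{1}{2}(y-x)^T A^T A(y-x)$; the entire nontrivial content of the proposition therefore lies in bounding $R(y)-R(x)$.

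For that bound, the first step I would take is Taylor's theorem with Lagrange remainder applied to $R$ centered at $y$, giving $R(x) \geq R(y) + (x-y)^T \nabla R(y) + \tfrac{1}{2}\lambda_r\|y-x\|^2$, where the segment $[x,y]$ lies in the convex set $X$ so the remainder is well defined and the lower bound $\nabla^2 R \succeq \lambda_r \I$ applies. Rearranging gives $R(y)-R(x) \leq (y-x)^T \nabla R(y) - \tfrac{1}{2}\lambda_r\|y-x\|^2$. The second step swaps $\nabla R(y)$ for $\nabla R(x)$ via the fundamental theorem of calculus, $\nabla R(y) - \nabla R(x) = \int_0^1 \nabla^2 R(x + t(y-x))(y-x)\,dt$, using $\nabla^2 R \preceq \lambda_R \I$ to obtain $(y-x)^T\nabla R(y) \leq (y-x)^T \nabla R(x) + \lambda_R\|y-x\|^2$. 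Combining produces $R(y)-R(x) \leq (y-x)^T \nabla R(x) + (\lambda_R - \tfrac{1}{2}\lambda_r)\|y-x\|^2$.

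Adding this to the exact quadratic expansion and using $\nabla F(x) = A^T(Ax-b) + \nabla R(x)$ yields a single pooled inequality with quadratic coefficient $(\lambda_R - \tfrac{1}{2}\lambda_r)\I + \tfrac{1}{2}A^T A$. Writing the identity part as $[(\lambda_R - \tfrac{1}{2}\lambda_r)\I - C] + C$ and folding one copy of $C$ into the $A^T A$ term recovers exactly the displayed form; no hypothesis on $C$ is needed because the two $C$-contributions cancel in the sum.

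The only subtlety worth flagging is why one goes through this two-stage bound at all: a single Taylor expansion at $x$ would give the sharper coefficient $\tfrac{1}{2}\lambda_R$, so the looser $\lambda_R - \tfrac{1}{2}\lambda_r$ seems wasteful until one notices that the decomposition $((\lambda_R - \tfrac{1}{2}\lambda_r)\I - C) + C$ is precisely the form needed to interpret $C$ as the tuning matrix of a convex/concave splitting $F = F^C + F^E$ and to read off a strongly convex quadratic upper bound in $y$ that can be minimized over $X$. The main obstacle is really bookkeeping — recognizing which Taylor expansion to use — rather than any analytic difficulty.
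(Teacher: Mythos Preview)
Your proof is correct and follows essentially the same route as the paper: both arguments combine an exact quadratic expansion of the data-fidelity term with a second-order Taylor estimate on $R$ centered at $y$ (invoking the lower bound $\lambda_r$) and a separate bound $(y-x)^T(\nabla R(y)-\nabla R(x)) \leq \lambda_R\|y-x\|^2$ (invoking the upper bound $\lambda_R$), then add and subtract $C$. The only cosmetic differences are that the paper expands $F$ about $y$ before peeling off the quadratic part and obtains the $\lambda_R$ bound by summing two Lagrange-remainder expansions of $R$ rather than your integral form of the fundamental theorem of calculus; your bookkeeping is slightly more economical but the content is identical.
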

\begin{proof}
The estimate follows from combining several second order Taylor expansions of $F$ and $R$ with our assumptions.  First expanding $F$ about $y$ and using $h = y-x$ to simplify notation, we get that
\[ F(x) = F(y) - h^T\nabla F(y) + \frac{1}{2}h^T \nabla^2 F(y - \alpha_1h)h \]
for some $\alpha_1 \in (0,1)$.  Substituting $F$ as defined by (\ref{eq:generalprobX}), we obtain
\begin{equation}\label{estF}
F(y)-F(x) = h^T(A^TAy - A^Tb + \nabla R(y)) - \frac{1}{2}h^TA^TAh - \frac{1}{2}h^T\nabla^2 R(y - \alpha_1h)h \end{equation}
Similarly, we can compute Taylor expansions of $R$ about both $x$ and $y$.
\[ R(x) = R(y) -h^T\nabla R(y) + \frac{1}{2}h^T\nabla^2 R(y - \alpha_2h)h \ . \]
\[ R(y) = R(x) + h^T\nabla R(x) + \frac{1}{2}h^T\nabla^2 R(x + \alpha_3h)h \ . \]
Again, both $\alpha_2$ and $\alpha_3$ are in $(0,1)$.  Adding these expressions implies that
\[ h^T(\nabla R(y) - \nabla R(x)) = \frac{1}{2} h^T \nabla^2 R(y - \alpha_2h)h + \frac{1}{2}h^T \nabla^2 R(x + \alpha_3h)h \ . \]
From the assumption that the eigenvalues of $\nabla^2 R$ are bounded above by $\lambda_R$ on $X$,
\begin{equation}\label{estR} h^T(\nabla R(y) - \nabla R(x)) \leq \lambda_R \|h\|^2 \ . \end{equation}
Adding and subtracting $h^T \nabla R(x)$ and $h^T A^TA x$ to (\ref{estF}) yields
\begin{align*}
F(y) - F(x) & = h^T A^TA h + h^T(A^TAx - A^Tb + \nabla R(x)) + h^T(\nabla R(y) - \nabla R(x)) \\
& \qquad  - \frac{1}{2}h^TA^TAh - \frac{1}{2}h^T \nabla^2 R(y - \alpha_1 h)h \\
& = \frac{1}{2}h^TA^TAh + h^T \nabla F(x) +h^T(\nabla R(y) - \nabla R(x)) - \frac{1}{2}h^T \nabla^2 R(y - \alpha_1 h)h \ . \end{align*}
Using (\ref{estR}),
\[ F(y) - F(x) \leq \frac{1}{2}h^TA^TAh + h^T \nabla F(x) - \frac{1}{2}h^T \nabla^2 R(y - \alpha_1 h)h + \lambda_R\|h\|^2 \ . \]
The assumption that the eigenvalues of $\nabla^2 R(x)$ are bounded below by $\lambda_r$ on $X$ means \[ F(y) - F(x) \leq (\lambda_R - \frac{1}{2}\lambda_r)\|h\|^2 + \frac{1}{2}h^TA^TAh + h^T \nabla F(x) \ . \]
Since the estimate is unchanged by adding and subtracting $h^TCh$ for any matrix $C$,
the inequality in (\ref{estimate}) follows directly.
\end{proof}

\begin{corollary} Let $C$ be symmetric positive definite and let $\lambda_c$ denote the smallest eigenvalue of $C$.  If $\lambda_c \geq \lambda_R - \frac{1}{2}\lambda_r$, then for $x,y \in X$,
\[ F(y) - F(x) \leq (y-x)^T(\frac{1}{2}A^TA + C)(y-x) + (y-x)^T\nabla F(x) \ . \]
\end{corollary}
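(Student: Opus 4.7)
The plan is to derive the corollary as a direct specialization of Proposition \ref{prop:estimate}, simply by showing that the first quadratic term on the right-hand side of (\ref{estimate}) is non-positive under the eigenvalue assumption on $C$. All the analytic work (the Taylor expansions and eigenvalue bounds on $\nabla^2 R$) has already been absorbed into (\ref{estimate}), so no further calculus is required.

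Concretely, I would begin by invoking Proposition \ref{prop:estimate} with the given $x, y \in X$ and the given $C$, writing out
\[ F(y) - F(x) \leq (y-x)^T\bigl((\lambda_R - \tfrac{1}{2}\lambda_r)\I - C\bigr)(y-x) + (y-x)^T\bigl(\tfrac{1}{2}A^TA + C\bigr)(y-x) + (y-x)^T\nabla F(x) . \]
The next step is to bound the first quadratic form from above. Since $C$ is symmetric positive definite with smallest eigenvalue $\lambda_c$, the Rayleigh quotient inequality gives $(y-x)^T C (y-x) \geq \lambda_c \|y-x\|^2$, and therefore
\[ (y-x)^T\bigl((\lambda_R - \tfrac{1}{2}\lambda_r)\I - C\bigr)(y-x) \leq \bigl(\lambda_R - \tfrac{1}{2}\lambda_r - \lambda_c\bigr)\|y-x\|^2 . \]
Under the hypothesis $\lambda_c \geq \lambda_R - \tfrac{1}{2}\lambda_r$, this upper bound is $\leq 0$, so the first term may be dropped from the estimate, yielding exactly the claimed inequality.

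There is no real obstacle here; the only thing to be careful about is that the assumption is stated in terms of the smallest eigenvalue of $C$ (so that the Rayleigh bound goes in the correct direction) and that $C$ is symmetric (so that the eigenvalues are real and the Rayleigh characterization applies). Both are given. The corollary is thus a two-line consequence of Proposition \ref{prop:estimate}, and its value lies not in its depth but in providing the clean quadratic upper bound that will be minimized over $X$ in the algorithm of Section \ref{algorithm}.
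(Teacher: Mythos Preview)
Your proposal is correct and matches the paper's intended argument: the corollary is an immediate consequence of Proposition~\ref{prop:estimate}, obtained by observing that the hypothesis $\lambda_c \geq \lambda_R - \tfrac{1}{2}\lambda_r$ forces the first quadratic term in (\ref{estimate}) to be non-positive. The paper does not even spell out a separate proof, treating this as evident from the proposition.
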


A natural strategy for solving (\ref{eq:generalprobX}) is then to iterate
\begin{equation}\label{QP} x^{n+1} = \arg \min_{x \in X} (x-x^n)^T(\frac{1}{2}A^TA + C_n)(x-x^n) + (x-x^n)^T\nabla F(x^n) \end{equation} for $C_n$ chosen to guarantee a sufficient decrease in $F$.  The method obtained by iterating (\ref{QP}) can be viewed as an instance of scaled gradient projection \cite{BT,B,BZZ} where the orthogonal projection of $x^n - (A^TA+2C_n)^{-1}\nabla F(x^n)$ onto $X$ is computed in the norm $\|\cdot\|_{A^TA+2C_n}$.  The approach of decreasing $F$ by minimizing an upper bound coming from an estimate like (\ref{estimate}) can be interpreted as an optimization transfer strategy of defining and minimizing a surrogate function \cite{LHY}, which is done for related applications in \cite{H2002,LS}.  It can also be interpreted as a special case of difference of convex programming \cite{TA}.

Choosing $C_n$ in such a way that guarantees $(x^{n+1}-x^n)^T((\lambda_R-\frac{1}{2}\lambda_r)\I - C_n)(x^{n+1}-x^n) \leq 0$ may be numerically inefficient, and it also isn't strictly necessary for the algorithm to converge.  To simplify the description of the algorithm, suppose $C_n = c_n C$ for some scalar $c_n > 0$ and symmetric positive definite $C$.  Then as $c_n$ gets larger, the method becomes more like explicit gradient projection with small time steps.  This can be slow to converge as well as more prone to converging to bad local minima.  However, the method still converges as long as each $c_n$ is chosen so that the $x^{n+1}$ update decreases $F$ sufficiently.  Therefore we want to dynamically choose $c_n \geq 0$ to be as small as possible such that the $x^{n+1}$ update given by (\ref{QP}) decreases $F$ by a sufficient amount, namely
\[ F(x^{n+1}) - F(x^n) \leq \sigma \left[ (x^{n+1}-x^n)^T(\frac{1}{2}A^TA + C_n)(x^{n+1}-x^n) + (x^{n+1}-x^n)^T\nabla F(x^n) \right] \]
for some $\sigma \in (0,1]$.  Additionally, we want to ensure that the modulus of strong convexity of the quadratic objective in (\ref{QP}) is large enough by requiring the smallest eigenvalue of $\frac{1}{2}A^TA + C_n$ to be greater than or equal to some $\rho >0$.  The following is an algorithm for solving (\ref{eq:generalprobX}) and a dymamic update scheme for $C_n = c_n C$ that is similar to Armijo line search but designed to reduce the number of times that the solution to the quadratic problem has to be rejected for not decreasing $F$ sufficiently.
\alg{
\label{SGPalg_dynamic}
A Scaled Gradient Projection Method for Solving (\ref{eq:generalprobX}) Under Assumption \ref{C2} \\ \ \\
\noindent Define $x^0 \in X$, $c_0 > 0$, $\sigma \in (0,1]$, $\epsilon > 0$, $\rho > 0$, $\xi_1 > 1, \xi_2 > 1$ and set $n = 0$. \\

\texttt{while} \ $n=0$ \ or \  $\|x^n - x^{n-1}\|_{\infty} > \epsilon$
\[ y = \arg \min_{x \in X} \ (x-x^n)^T(\frac{1}{2}A^TA + c_n C)(x-x^n) + (x-x^n)^T \nabla F(x^n) \qquad \qquad \]

\qquad \texttt{if} \ $F(y)-F(x^n) > \sigma \left[ (y-x^n)^T(\frac{1}{2}A^TA + c_n C)(y-x^n) + (y-x^n)^T \nabla F(x^n) \right]$ \\

\qquad \qquad $c_n = \xi_2 c_n$ \\

\qquad \texttt{else} \\

\qquad \qquad $x^{n+1} = y$

\qquad \qquad $c_{n+1} = \begin{cases} \frac{c_n}{\xi_1} & \text{if smallest eigenvalue of }  \frac{c_n}{\xi_1}C + \frac{1}{2}A^TA \ \text{is greater than }  \rho \\
c_n & \qquad \text{otherwise} \ . \end{cases}$


\qquad \qquad $n = n + 1$ \\

\qquad \texttt{end if} \\

\texttt{end while}
}

It isn't necessary to impose an upper bound on $c_n$ in Algorithm \ref{SGPalg_dynamic} even though we want it to be bounded.  The reason for this is because once $c_n \geq \lambda_R - \frac{1}{2}\lambda_r$, $F$ will be sufficiently decreased for any choice of $\sigma \in (0,1]$, so $c_n$ is effectively bounded by $\xi_2(\lambda_R - \frac{1}{2}\lambda_r)$.

Under Assumption \ref{C1concave} it is much more straightforward to derive an estimate analogous to Proposition \ref{prop:estimate}.  Concavity of $R(x)$ immediately implies
\[R(y) \leq R(x) + (y-x)^T \nabla R(x) \ . \]
Adding to this the expression
\[ \frac{1}{2}\|Ay-b\|^2 = \frac{1}{2}\|Ax-b\|^2 + (y-x)^T (A^TAx - A^Tb) + \frac{1}{2}(y-x)^T A^TA(y-x) \]
yields
\begin{equation}\label{eq:estimate_dc}
F(y) - F(x) \leq (y-x)^T \frac{1}{2}A^TA (y-x) + (y-x)^T \nabla F(x)
\end{equation}
for $x,y \in X$.  Moreover, the estimate still holds if we add $(y-x)^T C (y-x)$ to the right hand side for any positive semi-definite matrix $C$.  We are again led to iterating (\ref{QP}) to decrease $F$, and in this case $C_n$ need only be included to ensure that $A^TA+2C_n$ is positive definite.
We can let $C_n = C$ since the dependence on $n$ is no longer necessary.  We can choose any $C$ such that the smallest eigenvalue of $C + \frac{1}{2}A^TA$ is greater than $\rho > 0$, but it is still preferable to choose $C$ as small as is numerically practical.
\alg{
\label{SGPalg}
A Scaled Gradient Projection Method for Solving (\ref{eq:generalprobX}) Under Assumption \ref{C1concave} \\ \ \\
\noindent Define $x^0 \in X$, $C$ symmetric positive definite and $\epsilon > 0$. \\

\texttt{while} \ $n=0$ \ or \  $\|x^n - x^{n-1}\|_{\infty} > \epsilon$
\begin{equation}\label{QPc} x^{n+1} = \arg \min_{x \in X} \ (x-x^n)^T(\frac{1}{2}A^TA + C)(x-x^n) + (x-x^n)^T \nabla F(x^n) \qquad \qquad \end{equation}

\qquad \qquad $n = n + 1$ \\

\texttt{end while}
}
Since the objective in (\ref{QPc}) is zero at $x = x^n$, the minimum value is less than or equal to zero, and so $F(x^{n+1}) \leq F(x^n)$ by (\ref{eq:estimate_dc}).

Algorithm \ref{SGPalg} is also equivalent to iterating
\[ x^{n+1} = \arg \min_{x \in X} \frac{1}{2}\|Ax-b\|^2 + \|x\|_C^2 + x^T (\nabla R(x^n) - 2Cx^n) \ , \]
which can be seen as an application of the simplified difference of convex algorithm from \cite{TA} to $F(x) = (\frac{1}{2}\|Ax-b\|^2 + \|x\|_C^2) - (-R(x) + \|x\|_C^2)$.  The DC method in \cite{TA} is more general and doesn't require the convex and concave functions to be differentiable.

With many connections to classical algorithms, existing convergence results can be applied to argue that limit points of the iterates $\{x^n\}$ of Algorithms \ref{SGPalg} and \ref{SGPalg_dynamic} are stationary points of (\ref{eq:generalprobX}).  We still choose to include a convergence analysis for clarity because our assumptions allow us to give a simple and intuitive argument.  The following analysis is for Algorithm \ref{SGPalg_dynamic} under Assumption \ref{C2}.  However, if we replace $C_n$ with $C$ and $\sigma$ with $1$, then it applies equally well to Algorithm \ref{SGPalg} under Assumption \ref{C1concave}.  We proceed by showing that the sequence $\{x^n\}$ is bounded, $\|x^{n+1}-x^n\| \rightarrow 0$ and limit points of $\{x^n\}$ are stationary points of (\ref{eq:generalprobX}) satisfying the necessary local optimality condition (\ref{eq:genopt}).

\begin{lemma}\label{bounded}
The sequence of iterates $\{x^n\}$ generated by Algorithm \ref{SGPalg_dynamic} is bounded.
\end{lemma}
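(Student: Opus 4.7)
The plan is to deduce boundedness directly from the sufficient decrease condition built into the line search, together with the coercivity hypothesis in Assumption \ref{C2}. The core observation is that for each accepted iterate, $x^{n+1}$ is the minimizer over $X$ of the strongly convex quadratic
\[ Q_n(x) := (x-x^n)^T(\tfrac{1}{2}A^TA + c_n C)(x-x^n) + (x-x^n)^T \nabla F(x^n), \]
and since $Q_n(x^n) = 0$ and $x^n \in X$, we have $Q_n(x^{n+1}) \leq 0$. Combined with the Armijo-type acceptance test, this gives $F(x^{n+1}) - F(x^n) \leq \sigma Q_n(x^{n+1}) \leq 0$, so $F$ is monotone non-increasing along accepted iterates.

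Next I would verify that the inner \texttt{while} loop which inflates $c_n$ actually terminates, so that $x^{n+1}$ is well defined. By Proposition \ref{prop:estimate} (or its corollary), once the smallest eigenvalue of $c_n C$ exceeds $\lambda_R - \tfrac{1}{2}\lambda_r$, the estimate
\[ F(y) - F(x^n) \leq (y-x^n)^T(\tfrac{1}{2}A^TA + c_n C)(y-x^n) + (y-x^n)^T \nabla F(x^n) \]
holds for every $y \in X$, which is precisely the acceptance condition with $\sigma = 1$, and \emph{a fortiori} for any $\sigma \in (0,1]$. Since $c_n$ is inflated by the factor $\xi_2 > 1$ each time the test fails, after finitely many doublings the threshold is exceeded and a valid $x^{n+1}$ is produced. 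This also shows that the sequence of accepted $c_n$ is bounded by $\xi_2(\lambda_R - \tfrac{1}{2}\lambda_r)/\lambda_{\min}(C)$, a fact that will be useful elsewhere but is not needed for this lemma.

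Having established $F(x^{n+1}) \leq F(x^n)$ for all $n$, I would apply induction to get $F(x^n) \leq F(x^0)$ for every $n$, so that every iterate lies in the sublevel set $\{x \in X : F(x) \leq F(x^0)\}$. By the coercivity hypothesis in Assumption \ref{C2}, this set is bounded, and hence the entire sequence $\{x^n\}$ is bounded.

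I do not anticipate any genuine obstacle here; the only subtle point is ensuring the inner loop terminates, which is handled by the a priori estimate of Proposition \ref{prop:estimate}. Everything else is a direct chaining of the monotone decrease with coercivity.
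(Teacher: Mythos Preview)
Your argument is correct and follows the same route as the paper: establish $F(x^{n+1}) \leq F(x^n)$ from the acceptance test together with $Q_n(x^{n+1}) \leq Q_n(x^n) = 0$, then invoke the coercivity assumption on the sublevel set $\{x \in X : F(x) \leq F(x^0)\}$. The paper's proof is a one-liner that takes the monotonicity of $F(x^n)$ as already clear from the algorithm description; your version spells out why that monotonicity holds and also verifies termination of the inner loop, which the paper addresses separately in the paragraph following Algorithm~\ref{SGPalg_dynamic}.
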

\begin{proof}
Since $F(x^n)$ is non-increasing, $x^n \in \{ x \in X : F(x) \leq F(x^0) \}$, which is a bounded set by assumption.
\end{proof}

\begin{lemma}\label{slowing}
Let $\{x^n\}$ be the sequence of iterates generated by Algorithm \ref{SGPalg_dynamic}.  Then $\|x^{n+1}-x^n\| \rightarrow 0$.
\end{lemma}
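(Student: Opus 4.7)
The plan is to prove the stronger claim $\sum_{n=0}^{\infty} \|x^{n+1}-x^n\|^2 < \infty$, from which $\|x^{n+1}-x^n\| \to 0$ follows at once. The mechanism is a per-iteration decrease of the form $F(x^n) - F(x^{n+1}) \geq \sigma \rho \|x^{n+1}-x^n\|^2$, combined with the fact that $F$ is bounded below on $X$ by Assumption \ref{C2} together with the monotonicity guaranteed by the acceptance test.

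To obtain the per-iteration decrease, introduce the quadratic subproblem objective
\begin{equation*}
q_n(x) = (x-x^n)^T\bigl(\tfrac{1}{2}A^TA + c_n C\bigr)(x-x^n) + (x-x^n)^T \nabla F(x^n),
\end{equation*}
whose Hessian is $A^TA + 2c_n C$ and which satisfies $q_n(x^n) = 0$. Since $x^{n+1}$ minimizes $q_n$ over the convex set $X$ and $x^n \in X$, the first-order variational inequality gives $\nabla q_n(x^{n+1})^T(x^n - x^{n+1}) \geq 0$. Writing out the exact second-order expansion of the quadratic $q_n$ around $x^{n+1}$ at $x^n$, using $q_n(x^n)=0$, and invoking this variational inequality together with the eigenvalue floor $\tfrac{1}{2}A^TA + c_n C \succeq \rho \I$ preserved by the $c_n$-update rule, one obtains
\begin{equation*}
q_n(x^{n+1}) \leq -(x^{n+1}-x^n)^T\bigl(\tfrac{1}{2}A^TA + c_n C\bigr)(x^{n+1}-x^n) \leq -\rho\,\|x^{n+1}-x^n\|^2.
\end{equation*}
Chaining this with the algorithm's acceptance criterion $F(x^{n+1}) - F(x^n) \leq \sigma q_n(x^{n+1})$ yields the per-step decrease, and telescoping delivers
\begin{equation*}
\sum_{n=0}^{\infty} \|x^{n+1}-x^n\|^2 \leq \frac{F(x^0) - \inf_X F}{\sigma\rho} < \infty.
\end{equation*}

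The one subtlety is verifying that the inner loop which increases $c_n$ terminates, so that each $x^{n+1}$ is actually produced. Each rejection multiplies the working $c_n$ by $\xi_2 > 1$, so after finitely many rejections $c_n C \succeq (\lambda_R - \tfrac{1}{2}\lambda_r)\I$. Once this holds, Proposition \ref{prop:estimate} applied with its $C$ replaced by $c_n C$ gives $F(x^{n+1}) - F(x^n) \leq q_n(x^{n+1}) \leq 0$, and since $\sigma \in (0,1]$ the acceptance test necessarily passes on the next attempt. This termination of the inner loop is the only place where real care is required; the remainder is the clean combination of first-order optimality, strong convexity of $q_n$, and the coercivity-based lower bound on $F$.
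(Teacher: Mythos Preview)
Your proof is correct and follows essentially the same approach as the paper: both combine the first-order optimality condition for the constrained quadratic subproblem (your variational inequality $\nabla q_n(x^{n+1})^T(x^n-x^{n+1})\geq 0$) with the acceptance test and the eigenvalue floor $\tfrac{1}{2}A^TA + c_nC \succeq \rho\I$ to obtain $\rho\|x^{n+1}-x^n\|^2 \leq \tfrac{1}{\sigma}(F(x^n)-F(x^{n+1}))$, then use that $\{F(x^n)\}$ is non-increasing and bounded below. Your presentation via the Taylor expansion of $q_n$ is slightly cleaner, and you additionally note the summability $\sum_n\|x^{n+1}-x^n\|^2<\infty$ and the inner-loop termination (which the paper handles in the surrounding text rather than in the lemma itself), but the argument is the same.
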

\begin{proof}
Since $\{F(x^n)\}$ is bounded below and non-increasing, it converges.  By construction, $x^{n+1}$ satisfies
\[ - \left[ (x^{n+1}-x^n)^T(\frac{1}{2}A^TA + C_n)(x^{n+1}-x^n) + (x^{n+1}-x^n)^T\nabla F(x^n) \right] \leq \frac{1}{\sigma}(F(x^n) - F(x^{n+1})) \ . \]
By the optimality condition for (\ref{QP}),
\[ (y-x^{n+1})^T\left( (A^TA + 2C_n)(x^{n+1}-x^n) + \nabla F(x^n) \right) \geq 0 \qquad \forall y \in X \ . \]
In particular, we can take $y = x^n$, which implies
\[ (x^{n+1}-x^n)^T(A^TA + 2C_n)(x^{n+1}-x^n) \leq -(x^{n+1}-x^n)^T \nabla F(x^n) \ . \]
Thus
\[ (x^{n+1}-x^n)^T(\frac{1}{2}A^TA + C_n)(x^{n+1}-x^n) \leq \frac{1}{\sigma}(F(x^n) - F(x^{n+1})) \ . \]
Since the eigenvalues of $\frac{1}{2}A^TA + C_n$ are bounded below by $\rho > 0$, we have that
\[ \rho \|x^{n+1} - x^n\|^2 \leq \frac{1}{\sigma}(F(x^n) - F(x^{n+1})) \ . \]
The result follows from noting that
\[ \lim_{n \rightarrow \infty} \|x^{n+1}-x^n\|^2 \leq \lim_{n \rightarrow \infty} \frac{1}{\sigma \rho}(F(x^n) - F(x^{n+1})) , \]
which equals $0$ since $\{F(x^n)\}$ converges.
\end{proof}
\begin{proposition}\label{limitpoint}
Any limit point $x^*$ of the sequence of iterates $\{x^n\}$ generated by Algorithm \ref{SGPalg_dynamic} satisfies $(y-x^*)^T \nabla F(x^*) \geq 0$ for all $y \in X$, which means $x^*$ is a stationary point of (\ref{eq:generalprobX}).
\end{proposition}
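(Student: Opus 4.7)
The plan is to pass to the limit in the optimality condition for the quadratic subproblem (\ref{QP}) and show that the extra terms vanish. Let $x^*$ be a limit point and let $\{x^{n_k}\}$ be a subsequence with $x^{n_k}\to x^*$.

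First, since $x^{n+1}$ minimizes the strongly convex quadratic objective in (\ref{QP}) over the convex set $X$, its standard first-order optimality condition gives
\[ (y-x^{n+1})^T\bigl( (A^TA + 2C_n)(x^{n+1}-x^n) + \nabla F(x^n) \bigr) \geq 0 \qquad \forall y \in X \ . \]
By Lemma \ref{slowing}, $\|x^{n+1}-x^n\|\to 0$, so along the subsequence $x^{n_k+1}\to x^*$ as well, and by continuity $\nabla F(x^{n_k})\to \nabla F(x^*)$.

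The main obstacle is controlling $C_{n_k}=c_{n_k}C$ in the cross term $(A^TA+2C_{n_k})(x^{n_k+1}-x^{n_k})$. I would handle this using the remark already made after Algorithm \ref{SGPalg_dynamic}: once $c_n\geq \lambda_R-\frac{1}{2}\lambda_r$, Proposition \ref{prop:estimate} (with $C=c_nC$ and after possibly rescaling so that $C\succeq \I$, which we may assume WLOG since any fixed positive-definite scaling is absorbed into $c_n$) guarantees the sufficient-decrease test in the \texttt{if} branch holds for every $\sigma\in(0,1]$, so the \texttt{if} branch is never entered; thus $c_n$ can at most be multiplied by $\xi_2$ once after crossing this threshold. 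Hence $c_n\leq \xi_2(\lambda_R-\frac{1}{2}\lambda_r)$ for all $n$ (say, after the first iteration), and $\{C_{n_k}\}$ is a bounded sequence. Extracting a further subsequence (still indexed by $n_k$ for notational convenience) we may assume $C_{n_k}\to C^*$ with $C^*$ bounded.

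With these ingredients the conclusion is immediate: the cross term satisfies
\[ \|(A^TA+2C_{n_k})(x^{n_k+1}-x^{n_k})\|\leq \bigl(\|A^TA\|+2\|C_{n_k}\|\bigr)\,\|x^{n_k+1}-x^{n_k}\|\longrightarrow 0, \]
so passing to the limit along the subsequence in the optimality condition yields
\[ (y-x^*)^T\nabla F(x^*) \geq 0 \qquad \forall y\in X, \]
which is exactly the stationarity condition (\ref{eq:genopt}). For the case of Algorithm \ref{SGPalg} under Assumption \ref{C1concave} the same argument applies verbatim with $C_n\equiv C$, and boundedness is automatic; the only delicate point is the uniform bound on $c_n$, which as noted is the genuine content of the proof.
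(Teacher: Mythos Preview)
Your proof is correct and follows essentially the same route as the paper: write the first-order optimality condition for the subproblem (\ref{QP}), use Lemma~\ref{slowing} to get $x^{n_k+1}\to x^*$, use boundedness of $\{c_n\}$ (which the paper also relies on via the remark after Algorithm~\ref{SGPalg_dynamic}) to kill the cross term, and pass to the limit using continuity of $\nabla F$. One minor remark: the extraction of a further subsequence so that $C_{n_k}\to C^*$ is unnecessary, since your own estimate on the cross term uses only the uniform bound on $\|C_{n_k}\|$, not its convergence; you may simply drop that sentence.
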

\begin{proof}
Let $x^*$ be a limit point of $\{x^n\}$.  Since $\{x^n\}$ is bounded, such a point exists.  Let $\{x^{n_k}\}$ be a subsequence that converges to $x^*$.  Since $\|x^{n+1} - x^n\| \rightarrow 0$, we also have that $x^{n_k + 1} \rightarrow x^*$.  Recalling the optimality condition for (\ref{QP}),
\begin{align*} 0 & \leq (y-x^{n_k+1})^T\left( (A^TA + 2C_{n_k})(x^{n_k+1}-x^{n_k}) + \nabla F(x^{n_k}) \right) \leq \\
& \|y - x^{n_k + 1}\|\|A^TA + 2C_{n_k}\|\|x^{n_k + 1} - x^{n_k}\| + (y-x^{n_k + 1})^T \nabla F(x^{n_k}) \qquad \forall y \in X \ . \end{align*}
Following \cite{BT}, proceed by taking the limit along the subsequence as $n_k \rightarrow \infty$.
\[ \|y - x^{n_k + 1}\|\|x^{n_k + 1} - x^{n_k}\|\|A^TA + 2C_{n_k}\| \rightarrow 0 \]
since $\|x^{n_k+1} - x^{n_k}\| \rightarrow 0$ and $\|A^TA + 2C_{n_k}\|$ is bounded.  By continuity of $\nabla F$ we get that
\[ (y-x^*)^T \nabla F(x^*) \geq 0 \qquad \forall y \in X \ . \]
\end{proof}

Each iteration requires minimizing a strongly convex quadratic function over the set $X$ as defined in (\ref{QP}).  Many methods can be used to solve this, and we want to choose one that is as robust as possible to poor conditioning of $\frac{1}{2}A^TA + C_n$.  For example, gradient projection works theoretically and even converges at a linear rate, but it can still be impractically slow.  A better choice here is to use the alternating direction method of multipliers (ADMM) \cite{GM,GlM}, which alternately solves a linear system involving $\frac{1}{2}A^TA + C_n$ and projects onto the constraint set.  Applied to Problem 2, this is essentially the same as the application of split Bregman \cite{GO} to solve a NNLS model for hyperspectral demixing in \cite{SGO}.  We consider separately the application of ADMM to Problems 1 and 2.  The application to Problem 2 is simpler.

For Problem 2, (\ref{QP}) can be written as
\[ x^{n+1} = \arg \min_{x \geq 0} (x-x^n)^T(\frac{1}{2}A^TA + C_n)(x-x^n) + (x-x^n)^T \nabla F_S(x^n) \ . \]
To apply ADMM, we can first reformulate the problem as
\begin{equation}\label{QP2}
\min_{u,v} g_{\geq 0}(v) + (u-x^n)^T(\frac{1}{2}A^TA + C_n)(u-x^n) + (u-x^n)^T \nabla F_S(x^n) \quad \text{such that} \quad u=v \ , \end{equation}
where $g$ is an indicator function for the constraint defined by $g_{\geq 0}(v) = \begin{cases}0 & v \geq 0 \\ \infty & \text{otherwise} \end{cases}$.

Introduce a Lagrange multiplier $p$ and define a Lagrangian
\begin{equation}\label{L}
L(u,v,p) = g_{\geq 0}(v) + (u-x^n)^T(\frac{1}{2}A^TA + C_n)(u-x^n) + (u-x^n)^T \nabla F_S(x^n) + p^T(u-v) \end{equation}
and augmented Lagrangian
\[ L_{\delta}(u,v,p) = L(u,v,p) + \frac{\delta}{2}\|u-v\|^2 \ , \]
where $\delta > 0$.  ADMM finds a saddle point \[ L(u^*,v^*,p) \leq L(u^*,v^*,p^*) \leq L(u,v,p^*) \qquad \forall u,v,p \] by alternately minimizing $L_{\delta}$ with respect to $u$, minimizing with respect to $v$ and updating the dual variable $p$.  Having found a saddle point of $L$, $(u^*,v^*)$ will be a solution to (\ref{QP2}) and we can take $v^*$ to be the solution to (\ref{QP}). The explicit ADMM iterations are described in the following algorithm.
\alg{
\label{QP2_ADMM}
ADMM for solving convex subproblem for Problem 2 \\ \ \\
\noindent Define $\delta > 0$, $v^0$ and $p^0$ arbitrarily and let $k = 0$. \\

\texttt{while} \text{not converged}
\begin{align*}
u^{k+1} & = x^n + (A^TA + 2C_n + \delta \I)^{-1} \left( \delta(v^k - x^n) - p^k - \nabla F_S(x^n) \right) \\
v^{k+1} & = \Pi_{\geq 0} \left( u^{k+1} + \frac{p^k}{\delta} \right) \\
p^{k+1} & = p^k + \delta (u^{k+1} - v^{k+1})
\end{align*}

\qquad \qquad \ \ \  k = k + 1 \\

\texttt{end while}
}

Here $\Pi_{\geq 0}$ denotes the orthogonal projection onto the non-negative orthant.  For this application of ADMM to be practical, $(A^TA + 2C_n + \delta \I)^{-1}$ should not be too expensive to apply, and $\delta$ should be well chosen.

Since (\ref{QP}) is a standard quadratic program, a huge variety of other methods could also be applied.  Variants of Newton's method on a bound constrained KKT system might work well here, especially if we find we need to solve the subproblem to very high accuracy.




For Problem 2, (\ref{QP}) can be written as
\begin{align*} (x^{n+1},d^{n+1})&  = \arg \min_{x,d} (x-x^n)^T(\frac{1}{2}A^TA + C^x_n)(x-x^n) + (d-d^n)^T C^d_n (d-d^n) + \\
& (x-x^n)^T \nabla_x F_H(x^n,d^n) + (d-d^n)^T \nabla_d F_H(x^n,d^n) \ . \end{align*}
Here, $\nabla_x$ and $\nabla_d$ represent the gradients with respect to $x$ and $d$ respectively.  The matrix $C_n$ is assumed to be of the form $C_n = \bbm C^x_n & 0 \\ 0 & C^d_n \ebm$, with $C^d_n$ a diagonal matrix.
It is helpful to represent the constraints in terms of convex sets defined by
\[X_{\epsilon_j} = \left\{ \bbm x_j \\ d_j \ebm \in \R^{m_j+1} : \|x_j\|_1 + d_j \geq \epsilon_j, \qquad x_j \geq 0, \qquad d_j \geq 0 \right\} \ j = 1,...,M \ , \]
\[X_{\beta} = \left\{ d \in \R^M : \sum_{j=1}^M \frac{d_j}{\beta_j} \leq M - r , \qquad d_j \geq 0\right\} \ , \]
and indicator functions $g_{X_{\epsilon_j}}$ and $g_{X_{\beta}}$ for these sets.

Let $u$ and $w$ represent $x$ and $d$.  Then by adding splitting variables $v_x = u$ and $v_d = w$ we can reformulate the problem as
\begin{align*} \min_{u,w,v_x,v_d} & \sum_j g_{X_{\epsilon_j}}({v_x}_j,{v_d}_j) + g_{X_{\beta}}(w) + (u-x^n)^T(\frac{1}{2}A^TA + C^x_n)(u-x^n) + (w-d^n)^T C^d_n (w-d^n) + \\
& (x-x^n)^T \nabla_x F_H(x^n,d^n) + (w-d^n)^T \nabla_d F_H(x^n,d^n) \qquad \text{s.t.} \qquad v_x = u, v_d = w \ . \end{align*}
Adding Lagrange multipliers $p_x$ and $p_d$ for the linear constraints, we can define the augmented Lagrangian
\begin{align*} L_{\delta}(u,w,v_x,v_d,p_x,p_d) & = \sum_j g_{X_{\epsilon_j}}({v_x}_j,{v_d}_j) + g_{X_{\beta}}(w) + (u-x^n)^T(\frac{1}{2}A^TA + C^x_n)(u-x^n) + \\
& (w-d^n)^T C^d_n (w-d^n) + (x-x^n)^T \nabla_x F_H(x^n,d^n) + (w-d^n)^T \nabla_d F_H(x^n,d^n) + \\
& p_x^T(u - v_x) + p_d^T(w - v_d) + \frac{\delta}{2}\|u - v_x\|^2 + \frac{\delta}{2}\|w - v_d\|^2 \ . \end{align*}
Each ADMM iteration alternately minimizes $L_{\delta}$ first with respect to $(u,w)$ and then with respect to $(v_x,v_d)$ before updating the dual variables $(p_x,p_d)$.  The explicit iterations are described in the following algorithm.
\alg{
\label{QP1_ADMM}
ADMM for solving convex subproblem for Problem 1 \\ \ \\
\noindent Define $\delta > 0$, $v_x^0$, $v_d^0$, $p_x^0$ and $p_d^0$ arbitrarily and let $k = 0$. \\
\noindent Define the weights $\beta$ in the projection $\Pi_{X_{\beta}}$ by $\beta_j = (\epsilon_j \sqrt{(2C^d_n + \delta \I)_{j,j}})^{-1} \ \ j = 1,...,M$. \\

\texttt{while} \text{not converged}
\begin{align*}
u^{k+1} & = x^n + (A^TA + 2C^x_n + \delta \I)^{-1} \left( \delta(v_x^k - x^n) - p_x^k - \nabla_x F_H(x^n,d^n) \right) \\
w^{k+1} & = (2C^d_n + \delta \I)^{-\frac{1}{2}} \Pi_{X_{\beta}}\left( (2C^d_n + \delta \I)^{-\frac{1}{2}}(\delta v_d^k - p_d^k - \nabla_d F_H(x^n,d^n) + 2 C^d_n)   \right) \\
{\bbm {v_x}_j \\ {v_d}_j \ebm}^{k+1} & = \Pi_{X_{\epsilon_j}} \left( \bbm u_j^{k+1} + \frac{{p_x}_j^k}{\delta} \\ w_j^{k+1} + \frac{p_{d_j}^k}{\delta} \ebm \right) \ j = 1,...,M \\
p_x^{k+1} & = p_x^k + \delta (u^{k+1} - v_x^{k+1}) \\
p_d^{k+1} & = p_d^k + \delta (w^{k+1} - v_d^{k+1})
\end{align*}

\qquad \qquad \ \ k = k + 1 \\

\texttt{end while}
}
We stop iterating and let $x^{n+1} = v_x$ and $d^{n+1} = v_d$ once the relative errors of the primal and dual variables are sufficiently small.  The projections $\Pi_{X_{\beta}}$ and $\Pi_{X_{\epsilon_j}}$ can be efficiently computed by combining projections onto the non-negative orthant and
projections onto the appropriate simplices.  These can in principle be computed in linear time \cite{Brucker}, although we use a method that is simpler to implement and is still only $O(n \log n)$ in the dimension of the vector being projected.



\section{Applications \label{applications}}
In this section we introduce four specific applications related to DOAS analysis and hyperspectral demixing.  We show how to model these problems in the form of (\ref{eq:generalprobX}) so that the algorithms from Section \ref{algorithm} can be applied.

\subsection{DOAS Analysis}
The goal of DOAS is to estimate the concentrations of gases in a mixture by measuring over a range of wavelengths the reduction in the intensity of light shined through it.  A thorough summary of the procedure and analysis can be found in \cite{PS}.

Beer's law can be used to estimate the attenuation of light intensity due to absorption.  Assuming the average gas concentration $c$ is not too large, Beer's law relates the transmitted intensity $I(\lambda)$ to the initial intensity $I_0(\lambda)$ by
\begin{equation}\label{beer} I(\lambda) = I_0(\lambda) \exp^{-\sigma(\lambda) c L} , \end{equation}
where $\lambda$ is wavelength, $\sigma(\lambda)$ is the characteristic absorption spectra for the absorbing gas and $L$ is the light path length.

If the density of the absorbing gas is not constant, we should instead integrate over the light path, replacing $\exp^{-\sigma(\lambda) c L}$ by $\exp^{-\sigma(\lambda) \int_0^L c(l) dl}$.  For simplicity, we will assume the concentration is approximately constant.  We will also denote the product of concentration and path length, $cL$, by $a$.

When multiple absorbing gases are present, $a \sigma(\lambda)$ can be replaced by a linear combination of the characteristic absorption spectra of the gases, and Beer's law can be written as
\[ I(\lambda) = I_0(\lambda) \exp^{-\sum_j a_j \sigma_j (\lambda) } . \]

Additionally taking into account the reduction of light intensity due to scattering, combined into a single term $\epsilon(\lambda)$, Beer's law becomes
\[ I(\lambda) = I_0(\lambda) \exp^{-\sum_j a_j \sigma_j (\lambda)  - \epsilon(\lambda)} . \]

The key idea behind DOAS is that it is not necessary to explicitly model effects such as scattering, as long as they vary smoothly enough with wavelength to be removed by high pass filtering that loosely speaking removes the broad structures and keeps the narrow structures.  We will assume that $\epsilon(\lambda)$ is smooth.  Additionally, we can assume that $I_0(\lambda)$, if not known, is also smooth.  The absorption spectra $\sigma_j(\lambda)$ can be considered to be a sum of a broad part (smooth) and a narrow part, $\sigma_j = \sigma_j^{\text{broad}}$ + $\sigma_j^{\text{narrow}}$.  Since $\sigma_j^{\text{narrow}}$ represents the only narrow structure in the entire model, the main idea is to isolate it by taking the log of the intensity and applying high pass filtering or any other procedure, such as polynomial fitting, that subtracts a smooth background from the data.  The given reference spectra should already have had their broad parts subtracted, but it may not have been done consistently, so we will combine $\sigma_j^{\text{broad}}$ and $\epsilon(\lambda)$ into a single term $B(\lambda)$.  We will also denote the given reference spectra by $y_j$, which again are already assumed to be approximately high pass filtered versions of the true absorption spectra $\sigma_j$.  With these notational changes, Beer's law becomes
\begin{equation}\label{beer_broad} I(\lambda) = I_0(\lambda) \exp^{-\sum_j a_j y_j (\lambda)  - B(\lambda)} . \end{equation}

In practice, measurement errors must also be modeled.  We therefore consider multiplying the right hand side of (\ref{beer_broad}) by $s(\lambda)$, representing wavelength dependent sensitivity.  Assuming that $s(\lambda) \approx 1$ and varies smoothly with $\lambda$, we can absorb it into $B(\lambda)$.  Measurements may also be corrupted by convolution with an instrument function $h(\lambda)$, but for simplicity we will assume this effect is negligible and not include convolution with $h$ in the model.  Let $J(\lambda) = -\ln(I(\lambda))$.  This is what we will consider to be the given data.  By taking the log, the previous model simplifies to
\[ J(\lambda) = -\ln(I_0(\lambda)) + \sum_j a_j y_j(\lambda) + B(\lambda) + \eta(\lambda) , \]
where $\eta(\lambda)$ represents the log of multiplicative noise, which we will model as being approximately white Gaussian noise.

Since $I_0(\lambda)$ is assumed to be smooth, it can also be absorbed into the $B(\lambda)$ component, yielding the data model
\begin{equation}\label{DOAS_data_model1} J(\lambda) = \sum_j a_j y_j(\lambda) + B(\lambda) + \eta(\lambda) . \end{equation}

\subsubsection{DOAS Analysis with Wavelength Misalignment \label{sec:basic_DOAS}}
A challenging complication in practice is wavelength misalignment,
\emph{i.e.}, the nominal wavelengths in the measurement $J(\lambda)$
may not correspond exactly to those in the basis $y_j(\lambda)$.  We must allow for small, often approximately linear deformations $v_j(\lambda)$ so that $y_j(\lambda + v_j(\lambda))$ are all aligned with the data $J(\lambda)$.  Taking into account wavelength misalignment, the data model becomes
\begin{equation}\label{DOAS_data_model2} J(\lambda) = \sum_j a_j y_j(\lambda + v_j(\lambda)) + B(\lambda) + \eta(\lambda) . \end{equation}

To first focus on the alignment aspect of this problem, assume $B(\lambda)$ is negligible, having somehow been consistently removed from the data and references by high pass filtering or polynomial subtraction.  Then given the data $J(\lambda)$ and reference spectra
$\{y_j(\lambda)\}$, we want to estimate the fitting coefficients
$\{a_j\}$ and the deformations $\{v_j(\lambda)\}$ from the linear model,
\begin{equation}\label{eq:model_with_alignment}
J(\lambda) =\sum_{j=1}^M a_j y_j\big(\lambda+v_j(\lambda)\big)+\eta(\lambda)\ ,
\end{equation}
where $M$ is the total number of gases to be considered.

Inspired by the idea of using a set of modified bases for image
deconvolution \cite{louBS11}, we construct a dictionary by deforming
each $y_j$ with a set of possible deformations. Specifically, since the deformations can be well
approximated by linear functions, \emph{i.e.},
$v_j(\lambda)=p_j\lambda+q_j$, we enumerate all the possible
deformations by choosing $p_j, q_j$ from two pre-determined sets
$\{P_1,\cdots, P_K\}$, $\{Q_1,\cdots, Q_L\}$. Let $A_j$ be a matrix
whose columns are deformations of the $j$th reference $y_j(\lambda)$, \emph{i.e.}, $ y_j(\lambda+P_k\lambda+Q_l)$ for
$k=1,\cdots,K$ and $l=1,\cdots, L$. Then we can rewrite the model
(\ref{eq:model_with_alignment}) in terms of a matrix-vector form,
\begin{equation}\label{eq:model_dict}
J = [A_1, \cdots, A_M]\left[\begin{array}{c}x_1 \\ \vdots\\
x_M\end{array}\right] + \eta \ ,
\end{equation}
where $x_j \in \R^{KL}$ and $J \in \R^W$.  

We propose the following minimization model,
\begin{equation}
\left.\begin{array}{l}  \arg \min_{x_j} \frac{1}{2} \|J - [A_1, \cdots, A_M]\left[\begin{array}{c}x_1 \\ \vdots \\
x_M\end{array}\right]\|^2\ ,\\
\mbox{s.t.} \quad x_j\geqslant 0, \ \|x_j\|_0 \leqslant 1 \qquad
j=1,\cdots,M \ . \end{array}\right.\label{eq:opt_constraint}
\end{equation}
The second constraint in (\ref{eq:opt_constraint}) is to enforce
each $x_j$ to have at most one non-zero element.  Having $\|x_j\|_0 = 1$
indicates the existence of the gas with a spectrum $y_j$  Its
non-zero index corresponds to the selected deformation and its
magnitude corresponds to the concentration of the gas. This $l_0$
constraint makes the problem NP-hard.  A direct approach is the
penalty decomposition method proposed in \cite{luZ12}, which we will compare to in Section \ref{numerics}.  Our approach is to replace the $l_0$ constraint on each group with intra sparsity penalties defined by $H_j$ in (\ref{H}) or $S_j^{\epsilon}$ in (\ref{Sep}), putting the problem in the form of Problem 1 or Problem 2.  The intra sparsity parameters $\gamma_j$ should be chosen large enough to enforce 1-sparsity within groups, and in the absence of any inter group sparsity assumptions we can set $\gamma_0 = 0$.

\subsubsection{DOAS with Background Model}
To incorporate the background term from (\ref{DOAS_data_model2}), we will add $B \in \R^W$ as an additional unknown and also add a quadratic penalty $\frac{\alpha}{2} \|Q B\|^2$ to penalize a lack of smoothness of $B$.  This leads to the model
\[ \min_{x \in X, B} \frac{1}{2}\|Ax + B - J\|^2 + \frac{\alpha}{2}\|Q B\|^2 + R(x) \ , \]
where $R$ includes our choice of intra sparsity penalties on $x$.  This can be rewritten as
\begin{equation}\label{eq:opt_background}\min_{x \in X, B} \frac{1}{2}\left\| \bbm A & \I \\ 0 & \sqrt{\alpha}Q \ebm \bbm x \\ B \ebm - \bbm J \\ 0 \ebm \right\|^2 + R(x) \ . \end{equation}  This has the general form of (\ref{eq:generalprobX}) with the two by two block matrix interpreted as $A$ and $\bbm J \\ 0 \ebm$ interpreted as $b$.  Moreover, we can concatenate $B$ and the $M$ groups $x_j$ by considering $B$ to be group $x_{M+1}$ and setting $\gamma_{M+1} = 0$ so that no sparsity penalty acts on the background component.  In this way, we see that the algorithms presented in Section \ref{algorithm} can be directly applied to (\ref{eq:opt_background}).

It remains to define the matrix $Q$ used in the penalty to enforce smoothness of the estimated background.  A possible strategy is to work with the discrete Fourier transform or discrete cosine transform of $B$ and penalize high frequency coefficients.  Although $B$ should be smooth, it is unlikely to satisfy Neumann or periodic boundary conditions, so based on an idea in \cite{SR}, we will work with $B$ minus the linear function that interpolates its endpoints.  Let $L \in \R^{W \times W}$ be the matrix representation of the linear operator that takes the difference of $B$ and its linear interpolant.  Since $LB$ satisfies zero boundary conditions and its odd periodic extension should be smooth, its discrete sine transform (DST) coefficients should rapidly decay.  So we can penalize the high frequency DST coefficients of $LB$ to encourage smoothness of $B$.  Let $\Gamma$ denote the DST and let $W_B$ be a diagonal matrix of positive weights that are larger for higher frequencies.  An effective choice is $\diag(W_B)_i = i^2$, since the index $i = 0,..,W-1$ is proportional to frequency.  We then define $Q = W_B \Gamma L$ in (\ref{eq:opt_background}) and can adjust the strength of this smoothing penalty by changing the single parameter $\alpha > 0$.  Figure \ref{Q} shows the weights $W_B$ and the result $LB$ of subtracting from $B$ the line interpolating its endpoints.
\begin{figure}
\begin{center}
\begin{tabular}{cc}
$W_B$ & $LB$ \\
\includegraphics[width=0.3\textwidth]{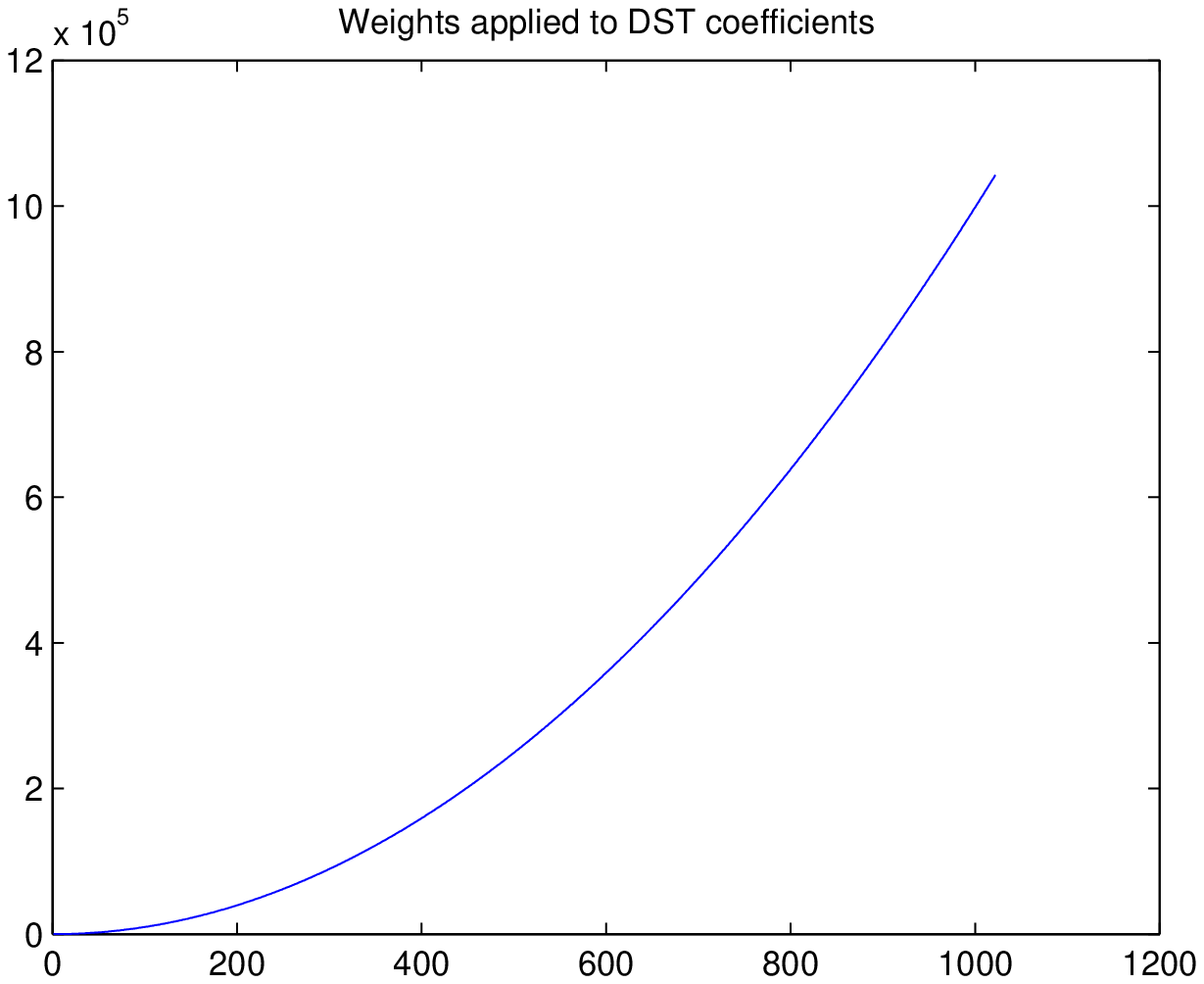}&
\includegraphics[width=0.3\textwidth]{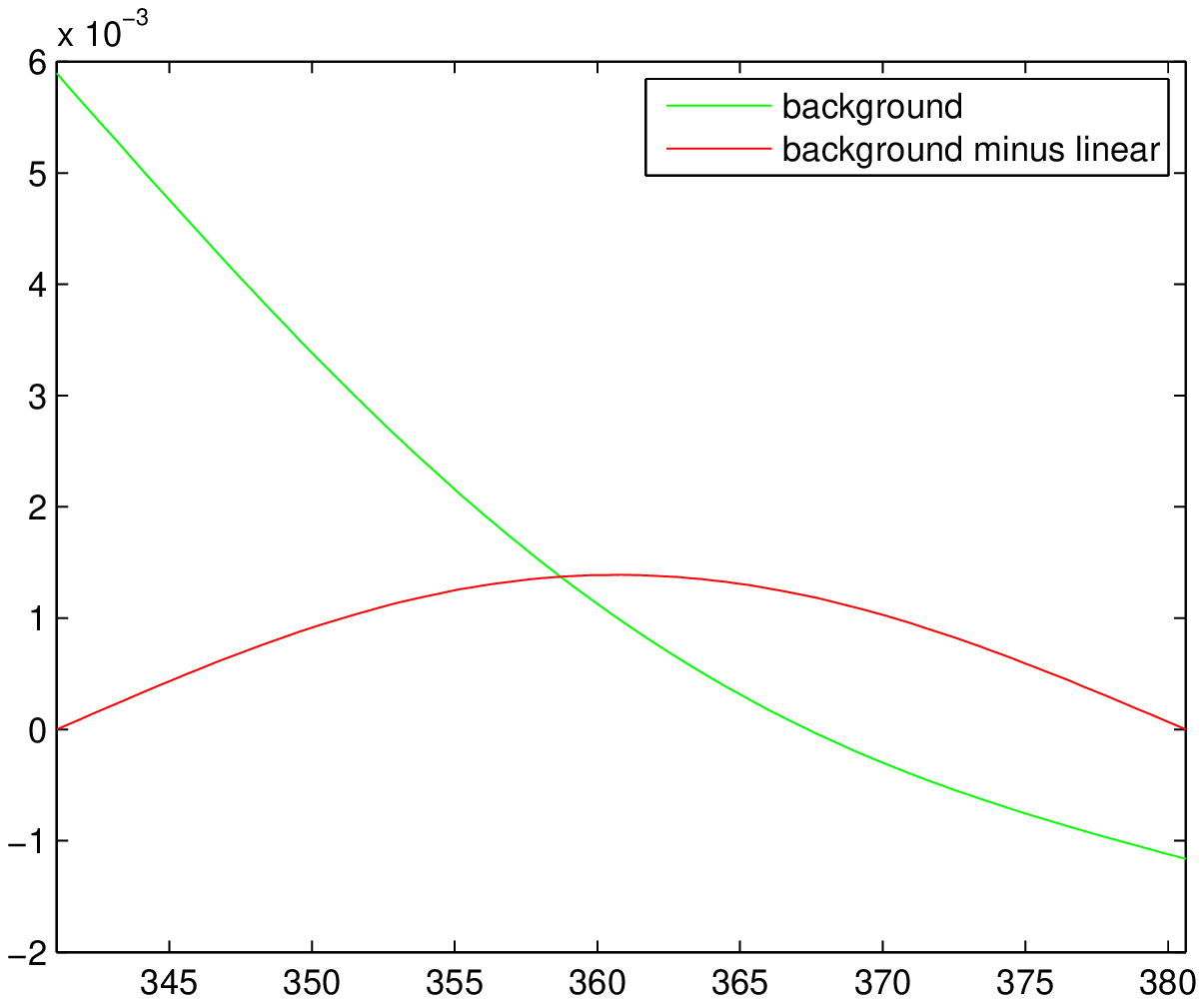}\\
\end{tabular}
\caption{Functions used to define background penalty} \label{Q}
\end{center}
\end{figure}

\subsection{Hyperspectral Image Analysis}
Hyperspectral images record high resolution spectral information at each pixel of an image.  This large amount of spectral data makes it possible to identify materials based on their spectral signatures.  A hyperspectral image can be represented as a matrix $Y \in \R^{W \times P}$, where $P$ is the number of pixels and $W$ is the number of spectral bands.

Due to low spatial resolution or finely mixed materials, each pixel can contain multiple different materials.  The spectral data measured at each pixel, according to a linear mixing model, is assumed to be a non-negative linear combination of spectral signatures of pure materials, which are called endmembers.  The list of known endmembers can be represented as the columns of a matrix $A \in \R^{W \times N}$.

The goal of hyperspectral demixing is to determine the abundances of different materials at each pixel.  Given $Y$, and if $A$ is also known, the goal is then to determine an abundance matrix $S \in \R^{N \times P}$ with $S_{i,j} \geq 0$.  Each row of $S$ is interpretable as an image that shows the abundance of one particular material at every pixel.  Mixtures are often assumed to involve only very few of the possible materials, so the columns of $S$ are often additionally assumed to be sparse.

\subsubsection{Sparse Hyperspectral Demixing \label{sec:basic_demix}}
A simple but effective approach for hyperspectral demixing is NNLS, which here is to solve
\[ \min_{S \geq 0} \|Y - AS\|_F^2 \ , \]
were $F$ denotes the Frobenius norm.  Many other tools have also been used to encourage additional sparsity of $S$, such as $l_1$ minimization and variants of matching pursuit \cite{GWO,SGO,IBP,Greer10}.  If no spatial correlations are assumed, the demixing problem can be solved at each pixel independently.  We can also add one of the nonconvex inter sparsity penalties defined by $H_0$ in (\ref{H}) or $S_0^{\epsilon}$ in (\ref{Sep}).  The resulting problem can be written in the form
\begin{equation}\label{eq:basic_demix} \min_{x_p \geq 0} \frac{1}{2}\|Ax_p - b_p\|^2 + R(x_p) \ , \end{equation}
where $x_p$ is the $p$th column of $S$ and $b_p$ is the $p$th column of $Y$.  We can define $R(x_p)$ to equal $H_0(x_p)$ or $S_0^{\epsilon}(x_p)$, putting (\ref{eq:basic_demix}) in the general form of (\ref{eq:generalprobX}).

\subsubsection{Structured Sparse Hyperspectral Demixing}
In hyperspectral demixing applications, the dictionary of endmembers is usually not known precisely.  There are many methods for learning endmembers from a hyperspectral image such as N-FINDR \cite{winter99}, vertex component analysis (VCA) \cite{Nascimento05}, NMF \cite{Pauca06}, Bayesian methods \cite{Zare08,Castrodad10} and convex optimization \cite{EMOSX}.  However, here we are interested in the case where we have a large library of measured reference endmembers including multiple references for each expected material measured under different conditions.  The resulting dictionary $A$ is assumed to have the group structure $[A_1, \cdots, A_M]$, where each group $A_j$ contains different references for the same $j$th material.

There are several reasons that we don't want to use the sparse demixing methods of Section \ref{sec:basic_demix} when $A$ contains a large library of references defined in this way.  Such a matrix $A$ with many nearly redundant references will likely have high coherence.  This creates a challenge for existing methods.  The grouped structure of $A$ also means that we want to enforce a structured sparsity assumption on the columns of $S$.  The linear combination of endmembers at any particular pixel is assumed to involve at most one endmember from each group $A_j$.  Linearly combining multiple references within a group may not be physically meaningful, since they all represent the same material.  Restricting our attention to a single pixel $p$, we can write the $p$th abundance column $x_p$ of $S$ as $ \bbm x_{1,p} \\ \vdots \\ x_{M,p} \ebm $.  The sparsity assumption requires each group of abundance coefficients $x_{j,p}$ to be at most one sparse.  We can enforce this by adding sufficiently large intra sparsity penalties to the objective in (\ref{eq:basic_demix}) defined by $H_j(x_{j,p})$ (\ref{H}) or $S_j^{\epsilon}(x_{j,p})$ (\ref{Sep}).

We think it may be important to use an expanded dictionary to allow different endmembers within groups to be selected at different pixels, thus incorporating endmember variability into the demixing process.  Existing methods accomplish this in different ways, such as the piece-wise convex endmember detection method in \cite{ZG}, which represents the spectral data as convex combinations of endmember distributions.  It is observed in \cite{ZG} that real hyperspectral data can be better represented using several sets of endmembers.  Additionally, their better performance compared to VCA, which assumes pixel purity, on a dataset which should satisfy the pixel purity assumption, further justifies the benefit of incorporating endmember variability when demixing.

If the same set of endmembers were valid at all pixels, we could attempt to enforce row sparsity of $S$ using for example the $l_{1,\infty}$ penalty used in \cite{EMOSX}, which would encourage the data at all pixels to be representable as non-negative linear combinations of the same small subset of endmembers.  Under some circumstances, this is a reasonable assumption and could be a good approach.  However, due to varying conditions, a particular reference for some material may be good at some pixels but not at others.  Although atmospheric conditions are of course unlikely to change from pixel to pixel, there could be nonlinear mixing effects that make the same material appear to have different spectral signatures in different locations \cite{KM}.  For instance, a nonuniform layer of dust will change the appearance of materials in different places.  If this mixing with dust is nonlinear, then the resulting hyperspectral data cannot necessarily be well represented by the linear mixture model with a dust endmember added to the dictionary.  In this case, by considering an expanded dictionary containing reference measurements for the materials covered by different amounts of dust, we are attempting to take into account these nonlinear mixing effects without explicitly modeling them.  At different pixels, different references for the same materials can now be used when trying to best represent the data.

The overall model should contain both intra and inter sparsity penalties.  In addition to the one sparsity assumption within groups, it is still assumed that many fewer than $M$ materials are present at any particular pixel.  The full model can again be written as (\ref{eq:basic_demix})
except with the addition of intra sparsity penalties.  The overall sparsity penalties can be written either as
\[R(x_p,d_p) = \sum_{j=1}^M \gamma_j H_j(x_{j,p},d_{j,p}) + \gamma_0 H_0(x_p) \]
or
\[ R(x_p) = \sum_{j=1}^M \gamma_j S_j^{\epsilon_j}(x_{j,p}) + \gamma_0 S_0^{\epsilon_0}(x_p) \ . \]




\section{Numerical Experiments \label{numerics}}
In this section, we evaluate the effectiveness of our implementations of Problems 1 and 2 on the four applications discussed in Section \ref{applications}.  The simplest DOAS example with wavelength misalignment from Section \ref{sec:basic_DOAS} is used to see how well the intra sparsity assumption is satisfied compared to other methods.  Two convex methods that we compare to are NNLS (\ref{eq:NNLS}) and a non-negative constrained $l_1$ basis pursuit model like the template matching via $l_1$ minimization in \cite{guoO11}.  The $l_1$ minimization model we use here is
\begin{equation}\label{eq:L1}
\min_{x \geq 0} \|x\|_1 \qquad \text{such that} \qquad \|Ax-b\| \leq \tau \ . \end{equation}
We use MATLAB's \verb|lsqnonneg| function, which is parameter free, to solve the NNLS model.  We use Bregman iteration \cite{YOGD} to solve the $l_1$ minimization model.  We also compare to direct $l_0$ minimization via penalty decomposition (Algorithm
\ref{alg:pdL0}).

The penalty decomposition method \cite{luZ12} amounts to solving
(\ref{eq:opt_constraint}) by a series of minimization problems with
an increasing sequence $\{\rho_k\}$. Let $x = [\bx_1, \cdots, \bx_M]$, $y=[\by_1, \cdots,\by_M]$ and iterate 
\begin{equation}
 \left. \begin{array}{l}
(x^{k+1}, y^{k+1}) = \mbox{arg}\min \frac 1 2 \|Ax-b\|^2+\frac
{\rho_k}2\|x-y\|^2\\
\qquad \qquad \qquad \quad \mbox{s.t.} \quad \by_j\geqslant 0, \ \|\by_j\|_0 \leqslant 1 \\
 \rho^{k+1} = \sigma\rho^k \quad (\mbox{for}\ \sigma>1) \ . 
  \end{array}\right.
\end{equation}
The pseudo-code of this method is given in Algorithm
\ref{alg:pdL0}.

\alg{ \label{alg:pdL0}
A penalty decomposition method for solving (\ref{eq:opt_constraint}) \\ \ \\
\noindent Define $\rho>0, \sigma>1, \epsilon_o, \epsilon_i$ and initialize $y$. \\

\texttt{while} \  $\|x - y\|_{\infty} > \epsilon_o$

    \qquad i = 1;

    \qquad \texttt{while} $\max\{\|x^i-x^{i-1}\|_\infty, \|y^i-y^{i-1}\|_\infty\} > \epsilon_i$

    \qquad \qquad $x^i = (A^TA+\rho Id)^{-1} (A^Tb+\rho y^i)$

    \qquad \qquad $y^i = 0$

    \qquad \qquad \texttt{for}  $j = 1, \cdots, M$

    \qquad \qquad \qquad find the index of maximal $\bx_j$,
    \emph{i.e.}, $l_j = \arg \max_l \bx_j(l)$

    \qquad \qquad \qquad Set $\by_j(l_j)=\max(\bx_j(l_j),0)$

    \qquad \qquad \texttt{end for}

    \qquad \qquad i = i+1;

    \qquad \texttt{end while}

    \qquad $x = x^i, y=y^i, \rho = \sigma \rho$

\texttt{end while} }

Algorithm \ref{alg:pdL0} may require a good initialization of $y$ or a slowly increasing $\rho$.  If the maximum magnitude locations within each group are initially incorrect, it can get stuck at a local minimum.  We consider both least square (LS) and NNLS initializations in numerical experiments.  Algorithms \ref{SGPalg_dynamic} and \ref{SGPalg} also benefit from a good initialization for the same reason.  We use a constant initialization, for which the first iteration of those methods is already quite similar to NNLS.

We also test the effectiveness of Problems 1 and 2 on the three other applications discussed in Section \ref{applications}.  For DOAS with the included background model, we compare again to Algorithm \ref{alg:pdL0}.  We use the sparse hyperspectral demixing example to demonstrate the sparsifying effect of the inter sparsity penalties acting without any intra sparsity penalties.  We compare to the $l_1$ regularized demixing model in \cite{GWO} using the implementation in \cite{SGO}. To illustrate the effect of the intra and inter sparsity penalties acting together, we also apply Problems 1 and 2 to a synthetic example of structured sparse hyperspectral demixing.  We compare the recovery of the ground truth abundance with and without the intra sparsity penalties.

\subsection{DOAS with Wavelength Alignment \label{sc:DOAS_wavelength}}
We generate the dictionary by taking three given reference spectra $y_j(\lambda)$ for the gases HONO, NO2 and O3 and deforming each by a set of linear
functions.  The resulting dictionary contains $ y_j(\lambda+P_k\lambda+Q_l)$ for $P_k =
-1.01+0.01k$ ($k = 1, \cdots, 21$), $Q_l = -1.1+0.1l$ ($l = 1,
\cdots, 21$) and $j = 1,2,3$.  Each $y_j \in \R^W$ with  $W = 1024$.  The represented wavelengths in nanometers are $\lambda = 340 + 0.04038w, \  w = 0,..,1023$.  We use odd reflections to extrapolate shifted references at the boundary.  The choice of boundary condition should only have a small effect if the wavelength displacements are small.  However, if the displacements are large, it may be a good idea to modify the data fidelity term to select only the middle wavelengths to prevent boundary artifacts from influencing the results.

There are a total of $441$ linearly deformed references for each of the three groups. In Figure~\ref{fig:DOAS_spectra}, we plot the
reference spectra of HONO, NO2 and O3 together with several deformed
examples.

\begin{figure}
\begin{center}
\begin{tabular}{ccc}
HONO & NO2 & O3\\
\includegraphics[width=0.3\textwidth]{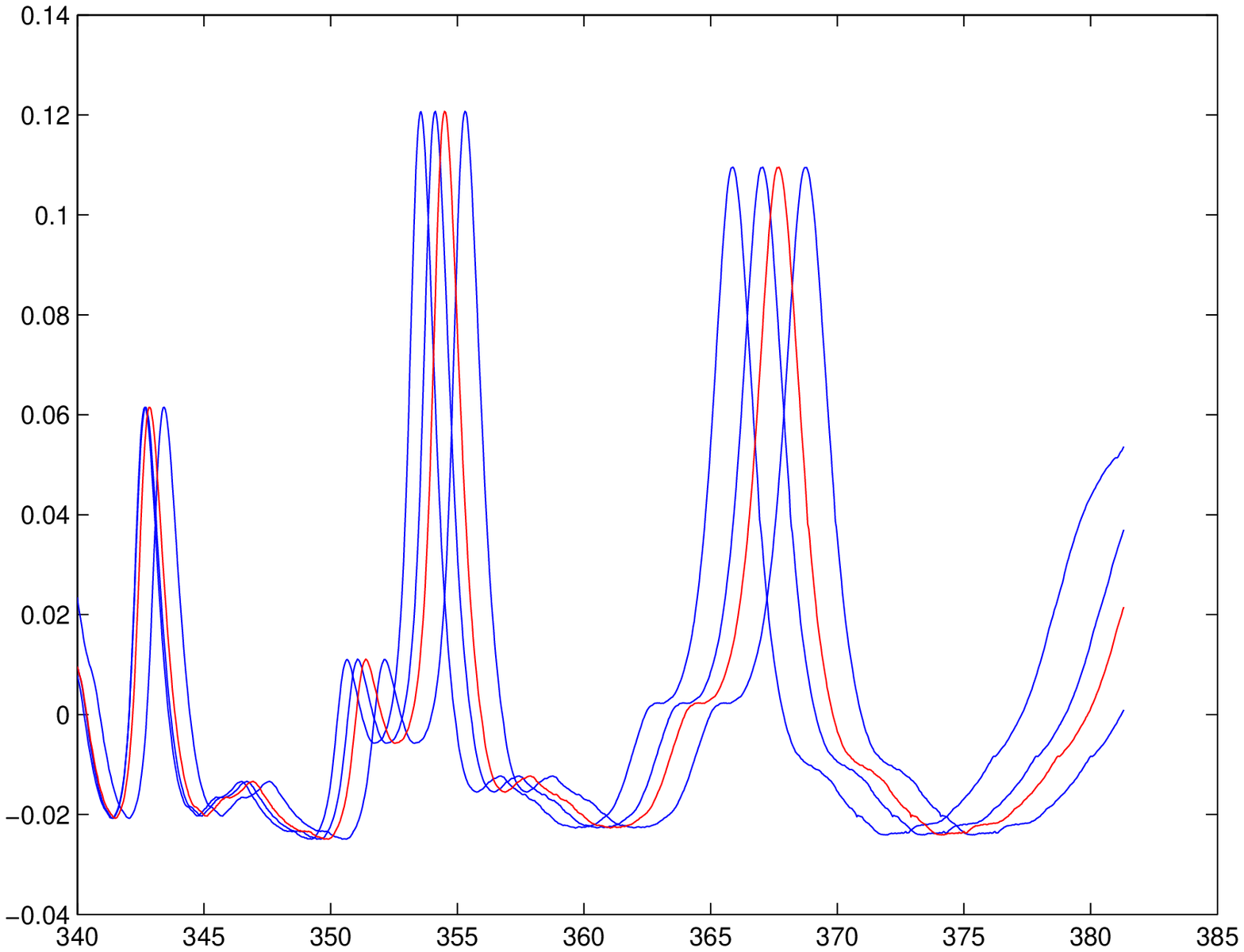}&
\includegraphics[width=0.3\textwidth]{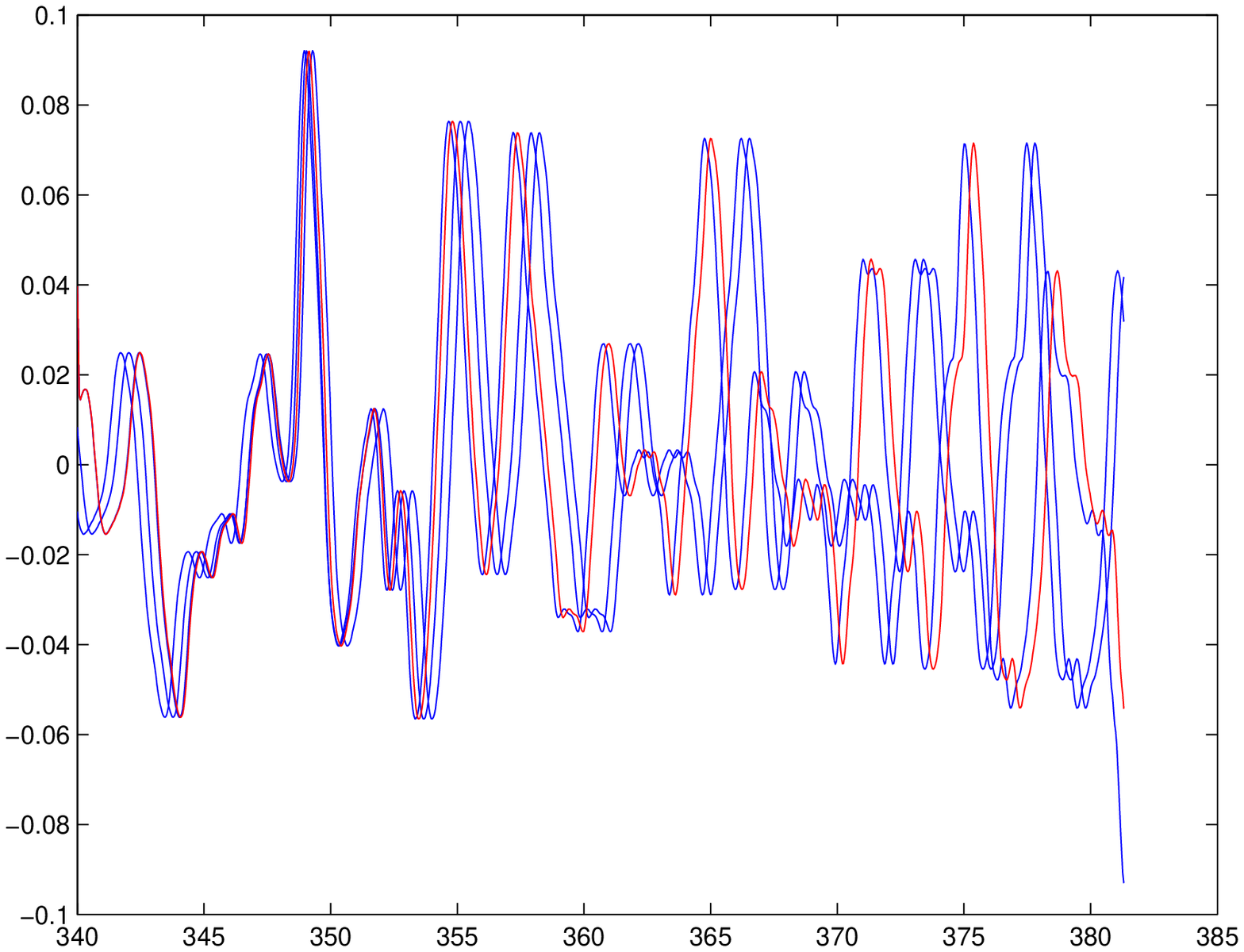}&
\includegraphics[width=0.3\textwidth]{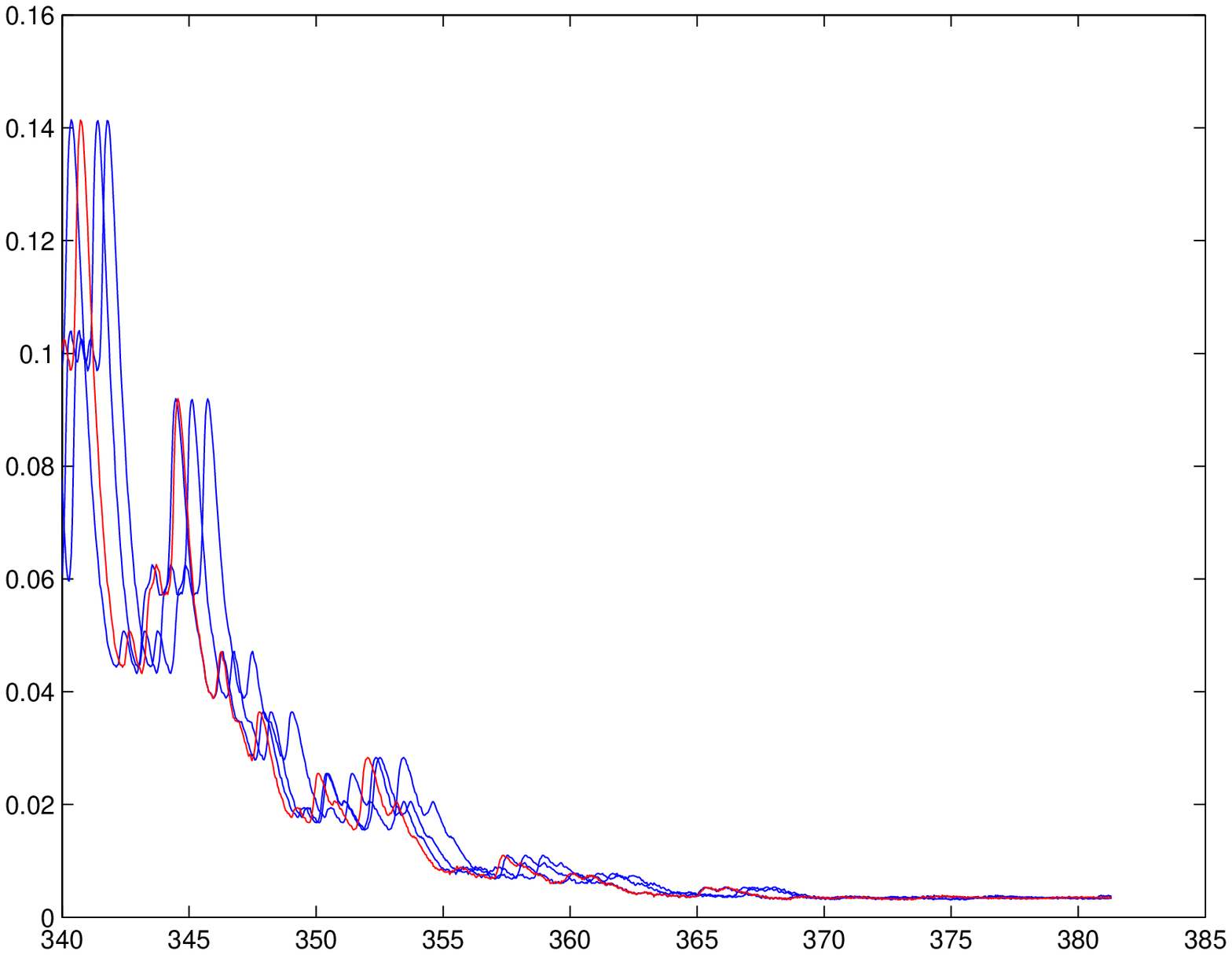}\\
\end{tabular}
\caption{For each gas, the reference spectrum is plotted in red,
while three deformed spectra are in blue. } \label{fig:DOAS_spectra}
\end{center}
\end{figure}

In our experiments, we randomly select one element for each group
with random magnitude plus additive zero mean Gaussian noise to synthesize the data
term $J(\lambda) \in \R^{W}$ for $W=1024$.  Mimicking the relative magnitudes of a real DOAS dataset \cite{DOASdata} after normalization of the dictionary, the random
magnitudes are chosen to be at different orders with mean values of 1, 0.1, 1.5
for HONO, NO2 and O3 respectively.  We perform three experiments for which the standard deviations of the noise are $0$, $.005$ and $.05$ respectively.  This synthetic data is shown in Figure \ref{fig:DOAS_data}.

\begin{figure}
\begin{center}
\begin{tabular}{ccc}
no noise & $\sigma = .005$ & $\sigma = .05$ \\
\includegraphics[width=0.3\textwidth]{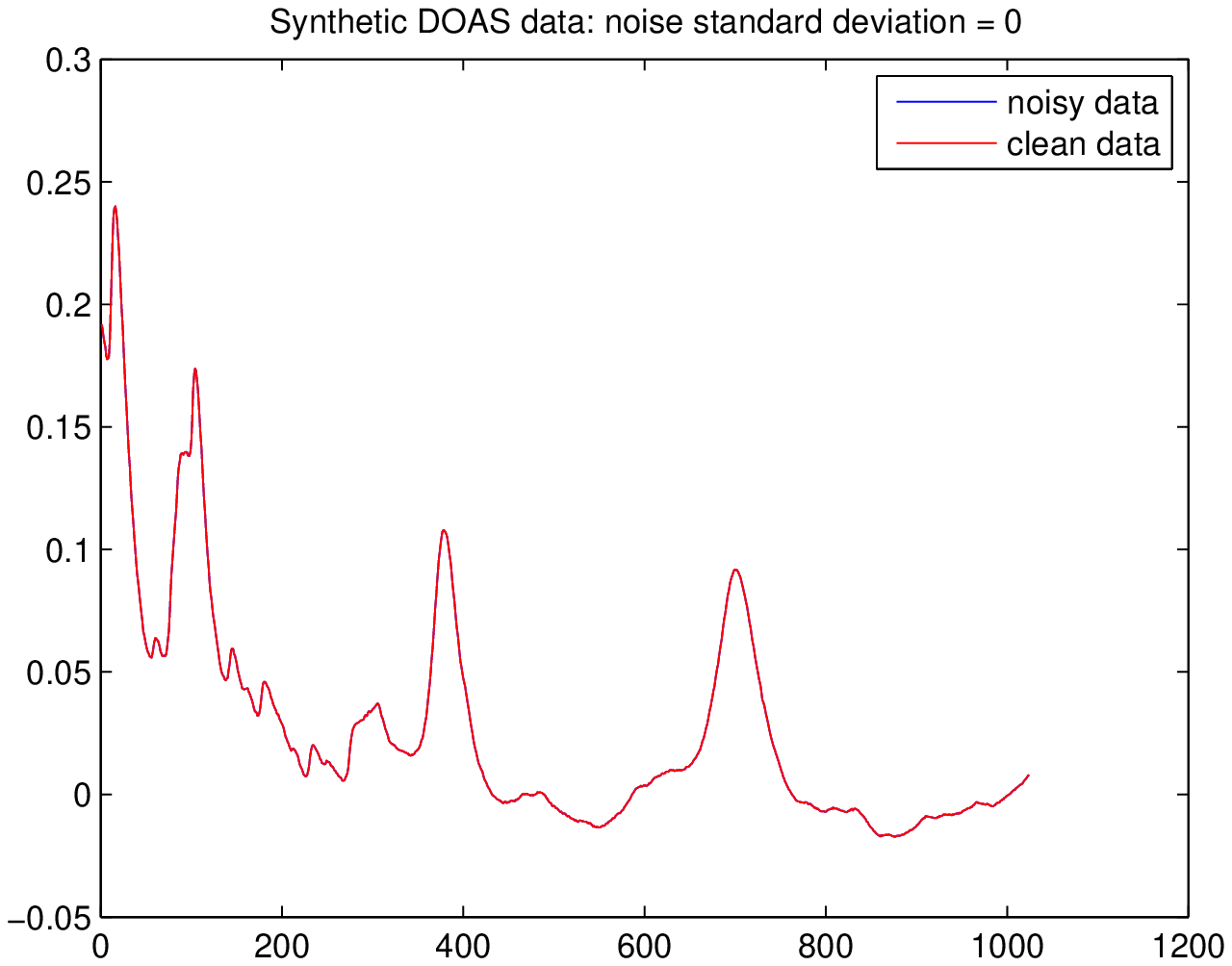}&
\includegraphics[width=0.3\textwidth]{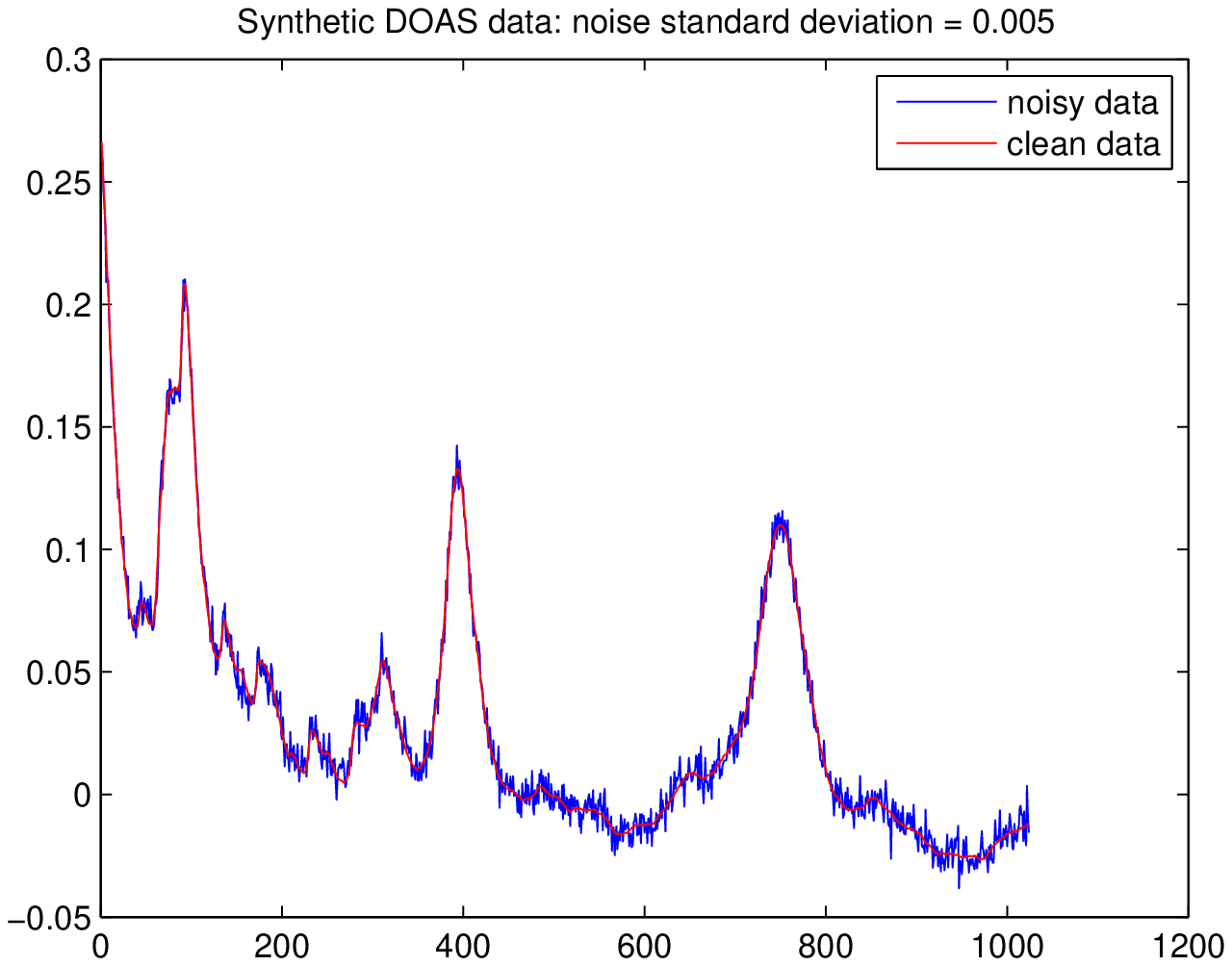}&
\includegraphics[width=0.3\textwidth]{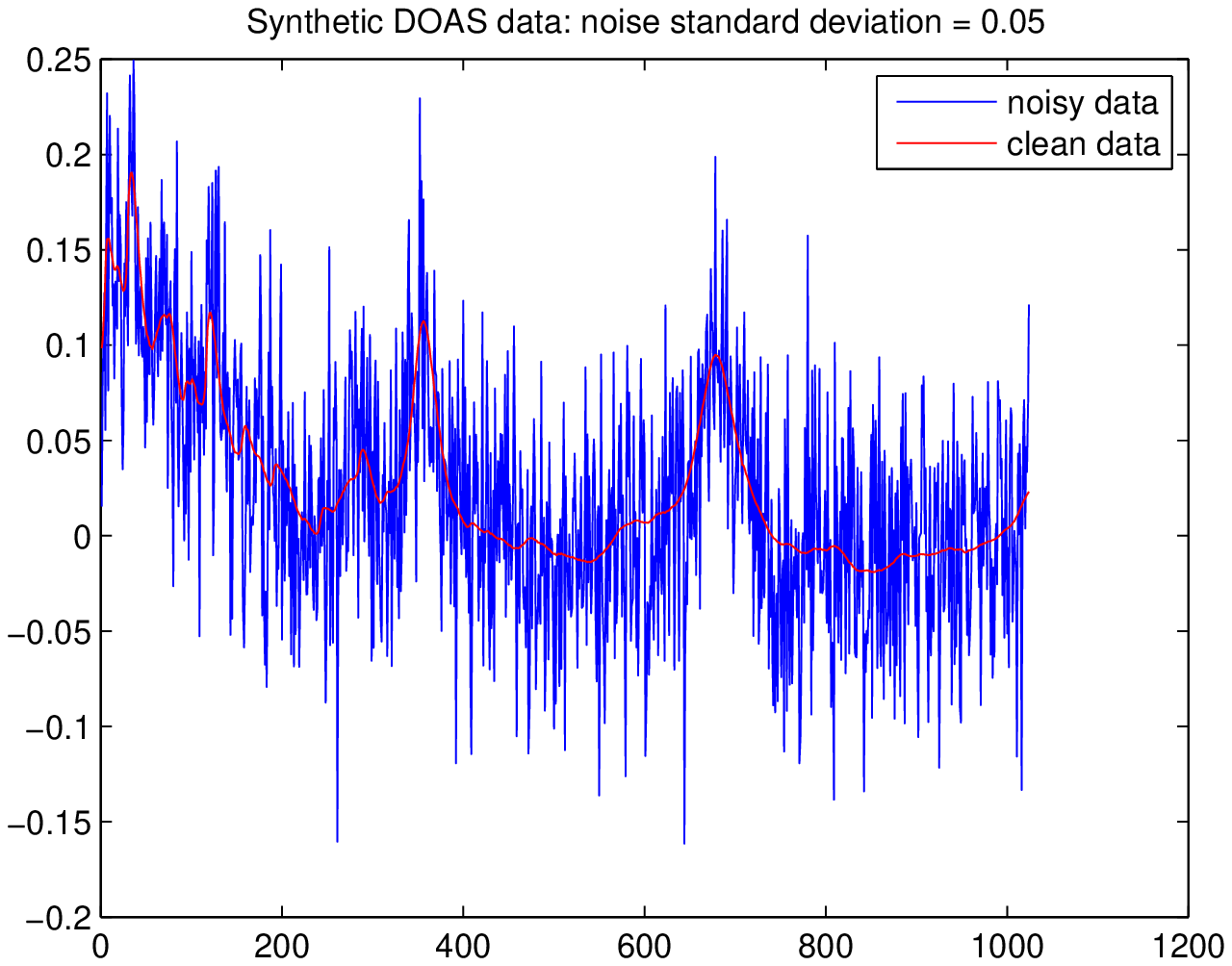}\\
\end{tabular}
\caption{Synthetic DOAS data } \label{fig:DOAS_data}
\end{center}
\end{figure}

The parameters used in the numerical experiments are as follows.  NNLS is parameter free.  For the $l_1$ minimization method in (\ref{eq:L1}), $\frac{\tau}{\sqrt{W}} = .001$, $.005$ and $.05$ for the experiments with noise standard deviations of $0$, $.005$ and $.05$ respectively.  For the direct $l_0$ method (Algorithm \ref{alg:pdL0}), the penalty parameter $\rho$ is initially equal to $.05$ and increases by a factor of $\sigma = 1.2$ every iteration.  The inner and outer tolerances are set at $10^{-4}$ and $10^{-5}$ respectively.  The initialization is chosen to be either a least squares solution or the result of NNLS.  For Problems 1 and 2 we define $\epsilon_j = .05$ for all three groups.  In general this could be chosen roughly on the order of the smallest nonzero coefficient expected in the $j$th group. Recall that these $\epsilon_j$ are used both in the definitions of the regularized $l_1$ - $l_2$ penalties $S_j^{\epsilon}$ in Problem 2 and in the definitions of the dummy variable constraints in Problem 1.  We set $\gamma_j = .1$ and $\gamma_j = .05$ for Problems 1 and 2 respectively and for $j = 1,2,3$.  Since there is no inter sparsity penalty, $\gamma_0 = 0$.  For both Algorithms \ref{SGPalg_dynamic} and \ref{SGPalg} we set $C = 10^{-9}\I$.  For Algorithm \ref{SGPalg_dynamic}, which dynamically updates $C$, we set several additional parameters $\sigma = .1$, $\xi_1 = 2$ and $\xi_2 = 10$.  These choices are not crucial and have more to do with the rate of convergence than the quality of the result.  For both algorithms, the outer iterations are stopped when the difference in energy is less than $10^{-8}$, and the inner ADMM iterations are stopped when the relative errors of the primal and dual variables are both less than $10^{-4}$.

We plot results of the different methods in blue along with the ground truth solution in red.  The experiments are shown in Figures \ref{fig:DOAS0}, \ref{fig:DOAS005} and \ref{fig:DOAS05}.



\begin{figure}
\begin{center}
\includegraphics[width=\textwidth]{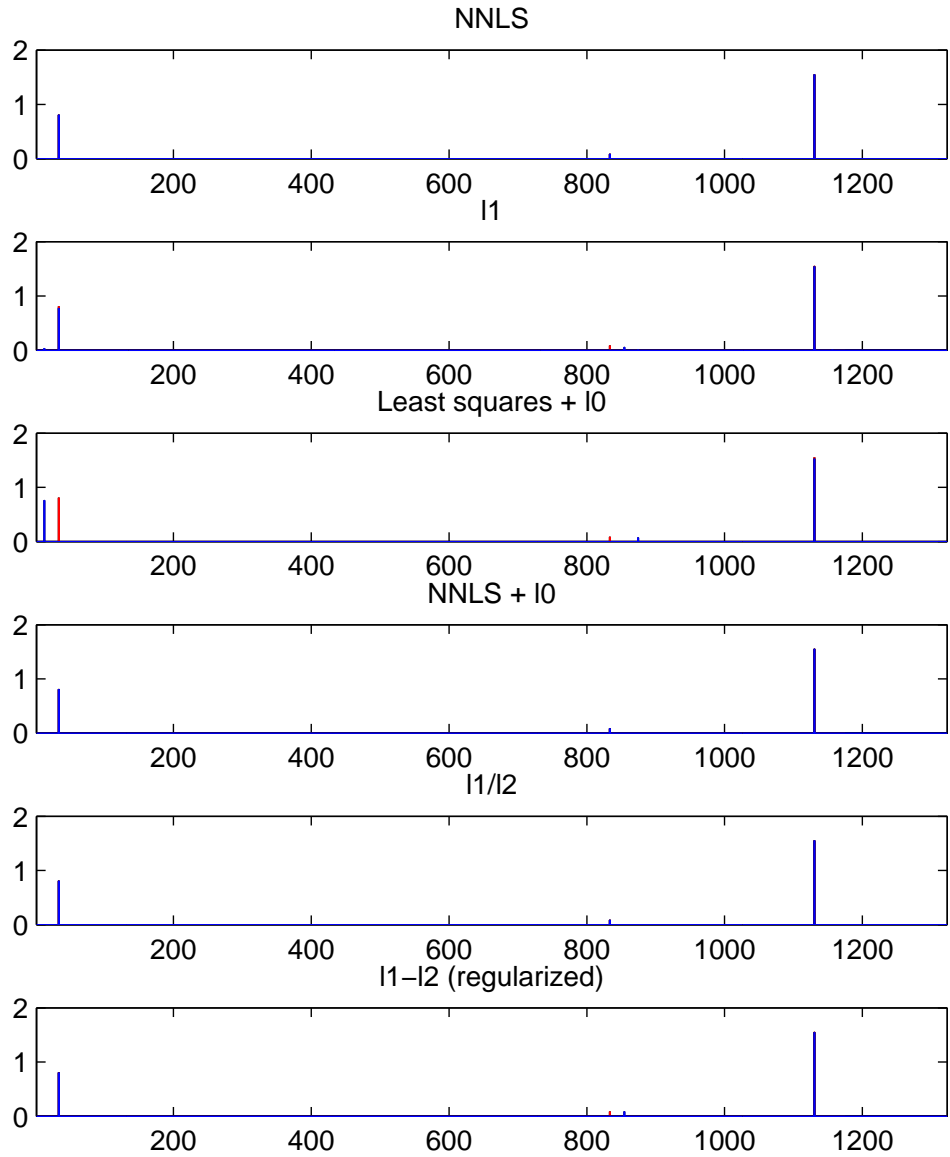} \\
\caption{Method comparisons on synthetic DOAS data without noise.  Computed coefficients (blue) are plotted on top of the ground truth (red).} \label{fig:DOAS0}
\end{center}
\end{figure}

\begin{figure}
\begin{center}
\includegraphics[width=\textwidth]{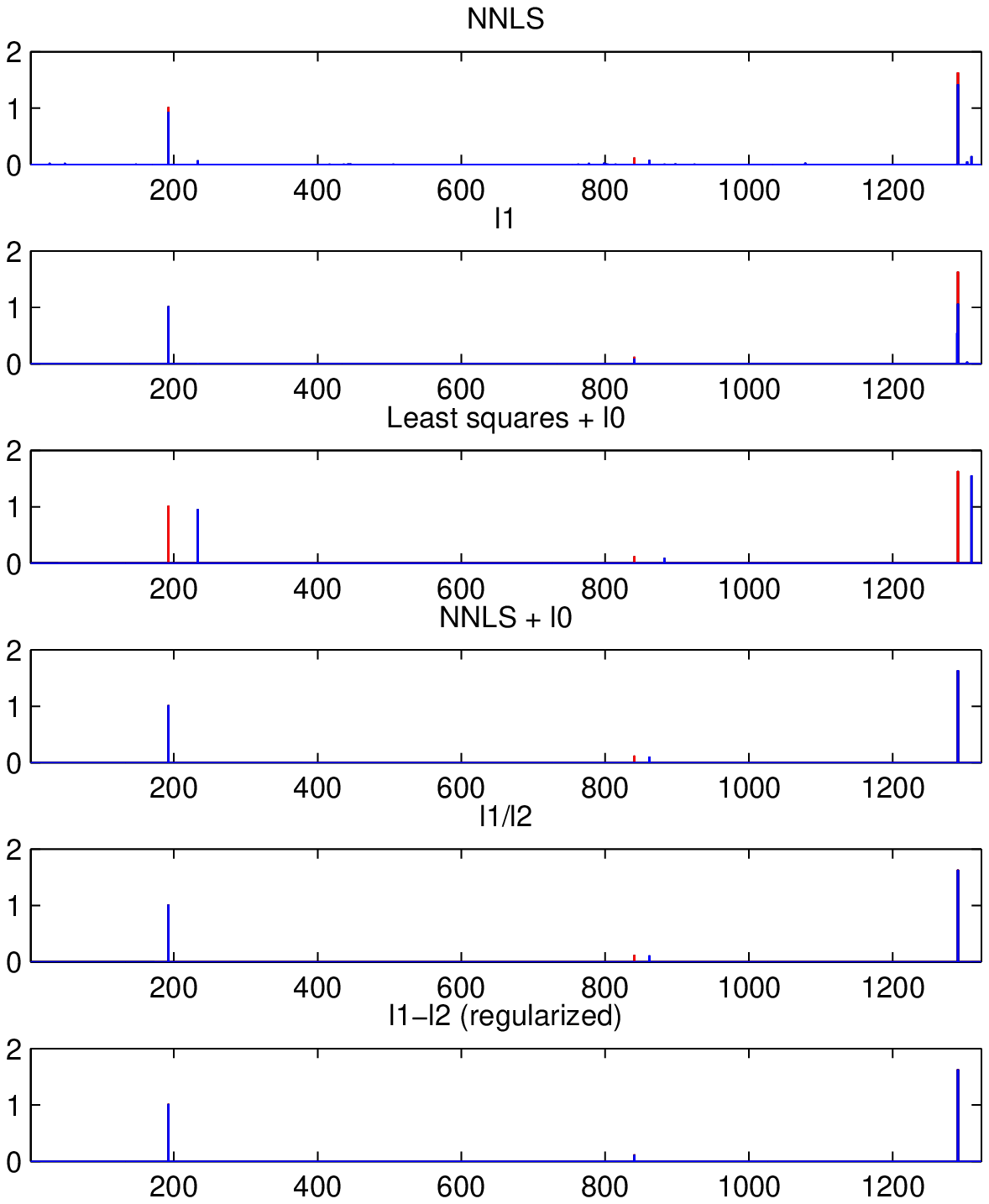} \\
\caption{Method comparisons on synthetic DOAS data: $\sigma = .005$. Computed coefficients (blue) are plotted on top of the ground truth (red).} \label{fig:DOAS005}
\end{center}
\end{figure}

\begin{figure}
\begin{center}
\includegraphics[width=\textwidth]{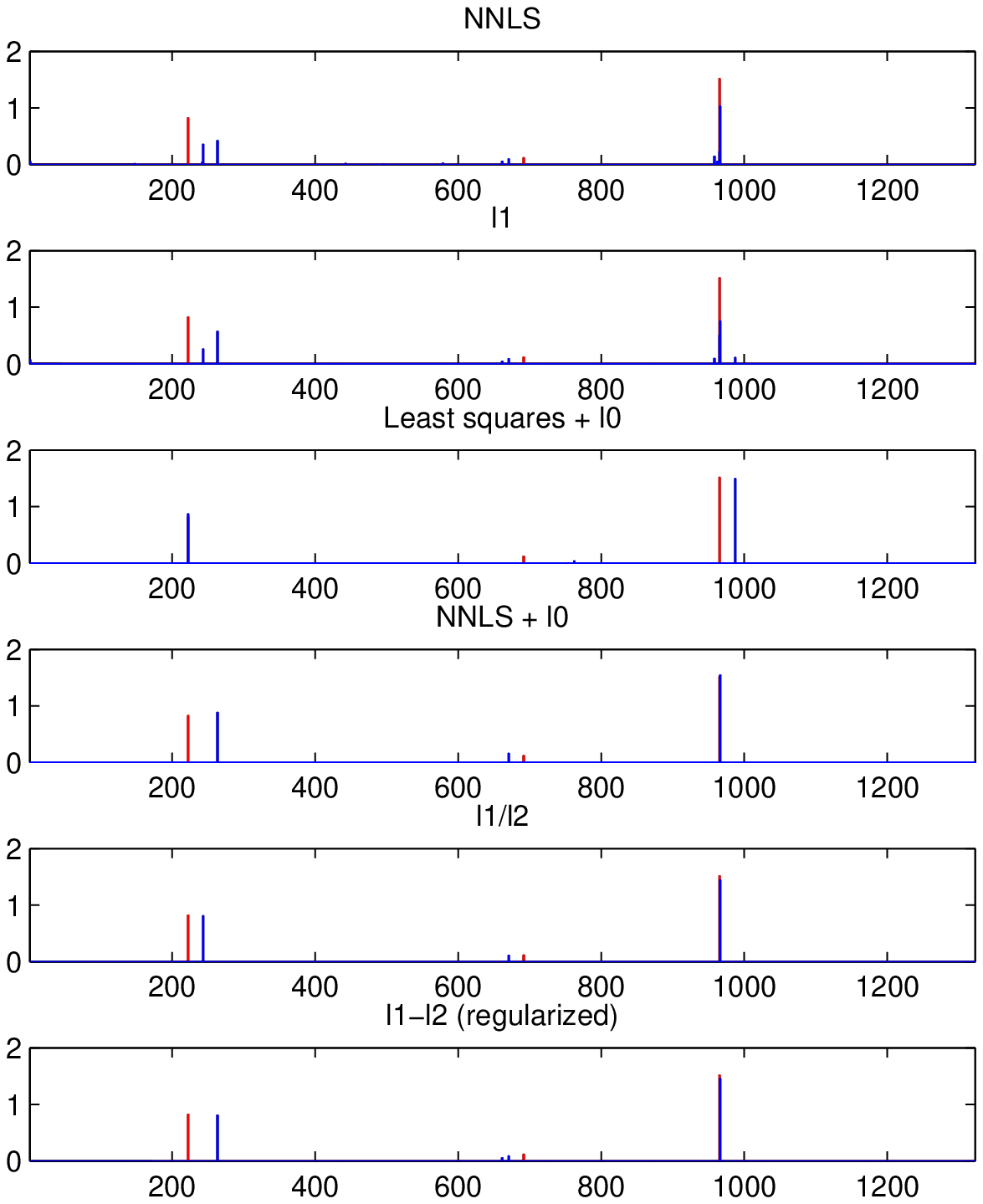} \\
\caption{Method comparisons on synthetic DOAS data: $\sigma = .05$. Computed coefficients (blue) are plotted on top of the ground truth (red).} \label{fig:DOAS05}
\end{center}
\end{figure}

\subsection{DOAS with Wavelength Alignment and Background Estimation}
We solve the model (\ref{eq:opt_background}) using $l_1$/$l_2$ and regularized $l_1$ - $l_2$ intra sparsity penalties.  These are special cases of Problems 1 and 2 respectively.  Depending on which, the convex set $X$ is either the non-negative orthant or a subset of it.  We compare the performance to the direct $l_0$ method (Algorithm \ref{alg:pdL0}) and least squares.  The dictionary consists of the same set of linearly deformed reference spectra for HONO, NO2 and O3 as in Section \ref{sc:DOAS_wavelength}.  The data $J$ is synthetically generated by
\[ J(\lambda) = .0121y_1(\lambda) + .0011y_2(\lambda) + .0159y_3(\lambda) + \frac{2}{(\lambda - 334)^4} + \eta(\lambda), \]
where the references $y_j$ are drawn from columns $180$, $682$ and $1103$ of the dictionary and the last two terms represent a smooth background component and zero mean Gaussian noise having standard deviation $5.58 10^{-5}$.  The parameter $\alpha$ in (\ref{eq:opt_background}) is set at $10^{-5}$ for all the experiments.

The least squares method for (\ref{eq:opt_background}) directly solves
\[ \min_{x, B} \frac{1}{2}\left\| \bbm A_3 & \I \\ 0 & \sqrt{\alpha}Q \ebm \bbm x \\ B \ebm - \bbm J \\ 0 \ebm \right\|^2 \ , \]
where $A_3$ has only three columns randomly chosen from the expanded dictionary $A$, with one chosen from each group.  Results are averaged over $1000$ random selections.

In Algorithm \ref{alg:pdL0}, the penalty parameter $\rho$ starts at $10^{-6}$ and increases by a factor of $\sigma = 1.1$ every iteration.  The inner and outer tolerances are set at $10^{-4}$ and $10^{-6}$ respectively.  The coefficients are initialized to zero.

In Algorithms \ref{SGPalg_dynamic} and \ref{SGPalg}, we treat the background as a fourth group of coefficients, after the three for each set of reference spectra.  For all groups $\epsilon_j$ is set to $.001$.  We set $\gamma_j = .001$ for $j = 1,2,3$, and $\gamma_4 = 0$, so no sparsity penalty is acting on the background component.  We set $C = 10^{-7} \I$ for Algorithm \ref{SGPalg} and $C = 10^{-4} \I$ for Algorithm \ref{SGPalg_dynamic}, where again we use $\sigma = .1$, $\xi_1 = 2$ and $\xi_2 = 10$.  We use a constant but nonzero initialization for the coefficients $x$.  The inner and outer iteration tolerances are the same as in Section \ref{sc:DOAS_wavelength} with the inner decreased to $10^{-5}$.

Figure \ref{fig:fit} compares how closely the results of the four methods fit the data.  Plotted are the synthetic data, the estimated background, each of the selected three linearly deformed reference spectra multiplied by their estimated fitting coefficients and finally the sum of the references and background.
\begin{figure}
\begin{center}
\includegraphics[width=\textwidth]{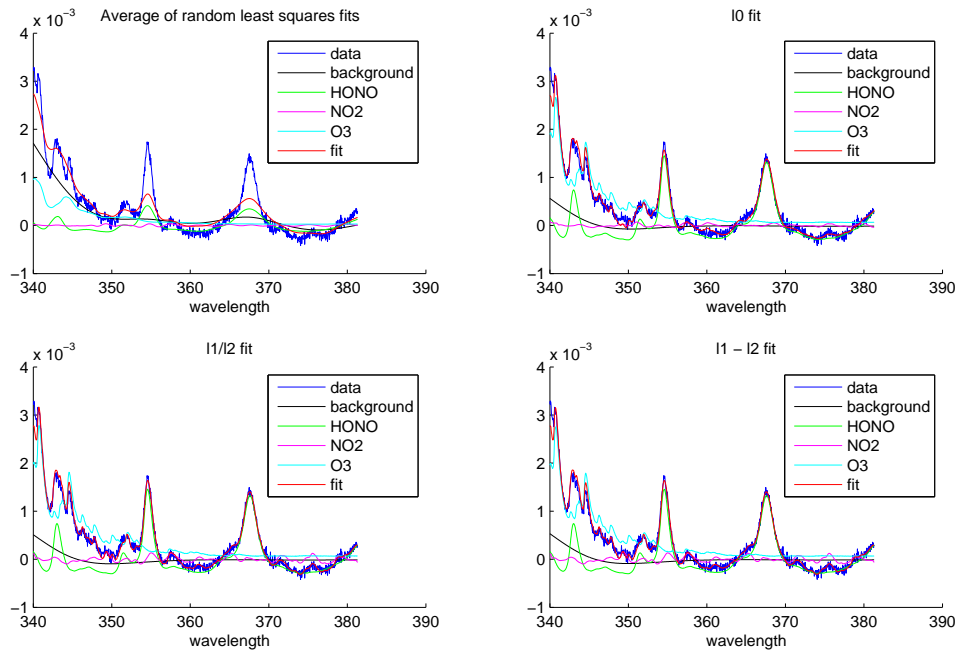} \\
\caption{Comparisons of how well the results of least squares, direct $l_0$, $l_1$/$l_2$ and regularized $l_1$ - $l_2$ fit the data.} \label{fig:fit}
\end{center}
\end{figure}

The computed coefficient magnitudes and displacements are compared to the ground truth in Table \ref{coeffsNdisps}.
\begin{table}
\begin{center}
\begin{tabular}{|l|c|c|c|c|c|}
\hline
& ground truth & least squares & $l_0$ & $l_1$/$l_2$ & $l_1$ - $l_2$ \\
\hline
$a_1$ (HONO coefficient) & 0.01206 & 0.00566 & 0.01197 & 0.01203 & 0.01202 \\
$a_2$ (NO2 coefficient) & 0.00112 & 0.00020 & 0.00081 & 0.00173 & 0.00173 \\
$a_3$ (O3 coefficient) & 0.01589 & 0.00812 & 0.01884 & 0.01967 & 0.01947 \\
\hline
$v_1$ (HONO displacement) & 0.01$\lambda$ - 0.2 & N/A & 0.01$\lambda$ - 0.2 & 0.01$\lambda$ - 0.2 & 0.01$\lambda$ - 0.2 \\
$v_2$ (NO2 displacement) & -0.01$\lambda$ + 0.1 & N/A & -0.09$\lambda$ - 0.9 & 0$\lambda$ - 0.2 & 0$\lambda$ - 0.2 \\
$v_3$ (O3 displacement) & 0$\lambda$ + 0 & N/A & 0$\lambda$ + 0 & 0$\lambda$ + 0  & 0$\lambda$ + 0  \\
\hline
\end{tabular}
\caption{Comparison of estimated fitting coefficients and displacements for DOAS with background estimation} \label{coeffsNdisps}
\end{center}
\end{table}

The dictionary perhaps included some unrealistically large deformations of the references.  Nonetheless, the least squares result shows that the coefficient magnitudes are underestimated when the alignment is incorrect.  The methods for the $l_0$, $l_1$/$l_2$ and regularized $l_1$ - $l_2$ models all produced good and nearly equivalent results.  All estimated the correct displacements of HONO and O3, but not NO2.  The estimated amounts of HONO and NO2 were correct.  The amount of O3 was overestimated by all methods.  This is because there was a large background component in the O3 reference.  Even with background estimation included in the model, it should still improve accuracy to work with references that have been high pass filtered ahead of time.

Although the methods for the $l_0$, $l_1$/$l_2$ and regularized $l_1$ - $l_2$ models all yielded similar solutions, they have different pros and cons regarding parameter selection and runtime.  It is important that $\rho$ not increase too quickly in the direct $l_0$ method.  Otherwise it can get stuck at a poor solution.  For this DOAS example, the resulting method required about 200 iterations and a little over 10 minutes to converge. Algorithm \ref{SGPalg_dynamic} for the $l_1$/$l_2$ model can sometimes waste effort finding splitting coefficients that yield a sufficient decrease in energy.  Here it required 20 outer iterations and ran in a few minutes.  Algorithm \ref{SGPalg} required 8 outer iterations and took about a minute.  Choosing $\gamma_j$ too large can also cause the $l_1$/$l_2$ and $l_1$ - $l_2$ methods to get stuck at bad local minima.  On the other hand, choosing $\gamma_j$ too small may result in the group 1-sparsity condition not being satisfied, whereas it is satisfied by construction in the direct $l_0$ approach.  Empirically, gradually increasing $\gamma_j$ works well, but we have simply used fixed parameters for all our experiments.

\comments{
Preliminary numerical results using $l_1$/$l_2$ are shown in Figure \ref{fig:DOAS_BG}
\begin{figure}
\begin{center}
\begin{tabular}{cc}
Background & Coefficients \\
\includegraphics[width=0.5\textwidth]{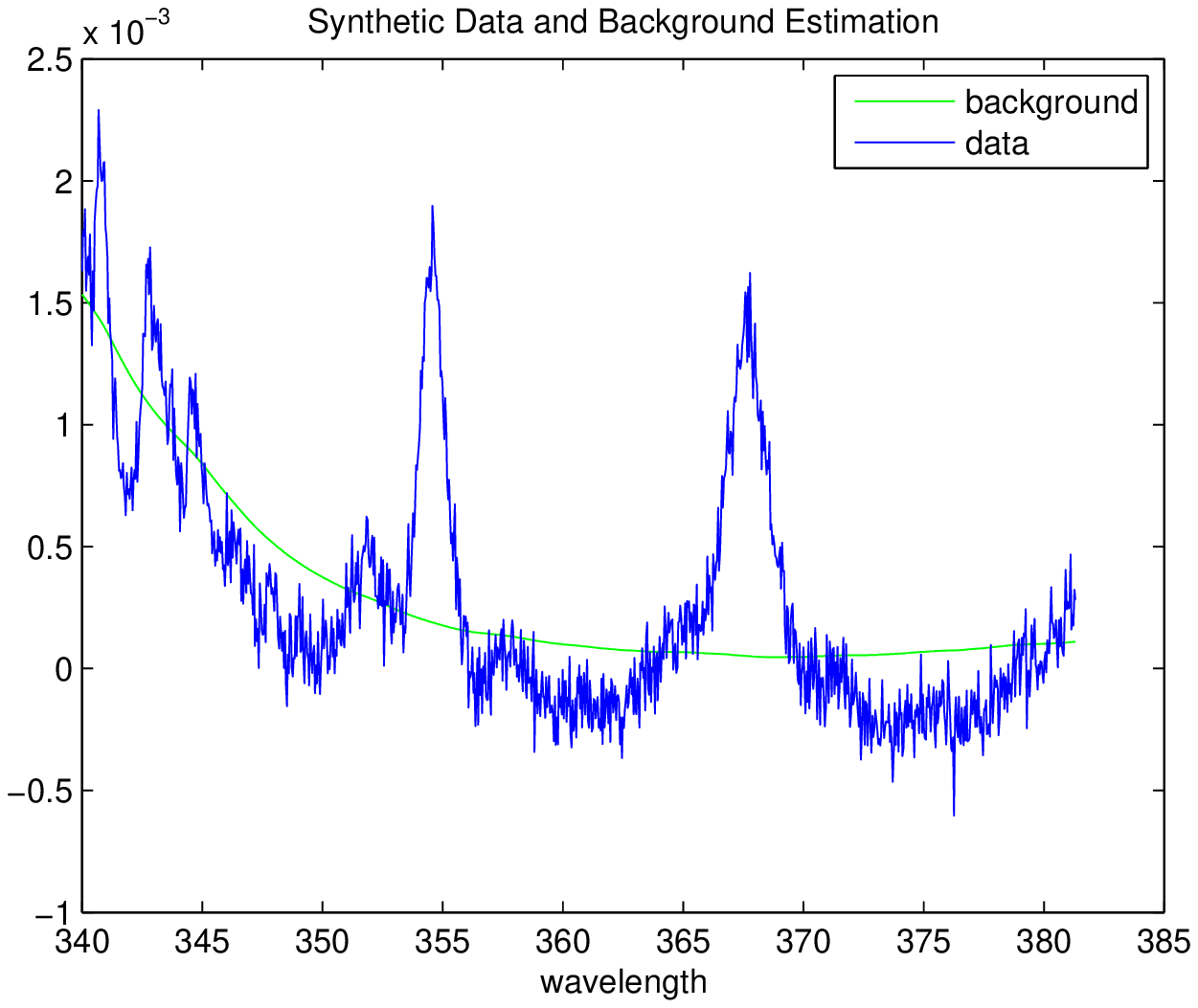}&
\includegraphics[width=0.5\textwidth]{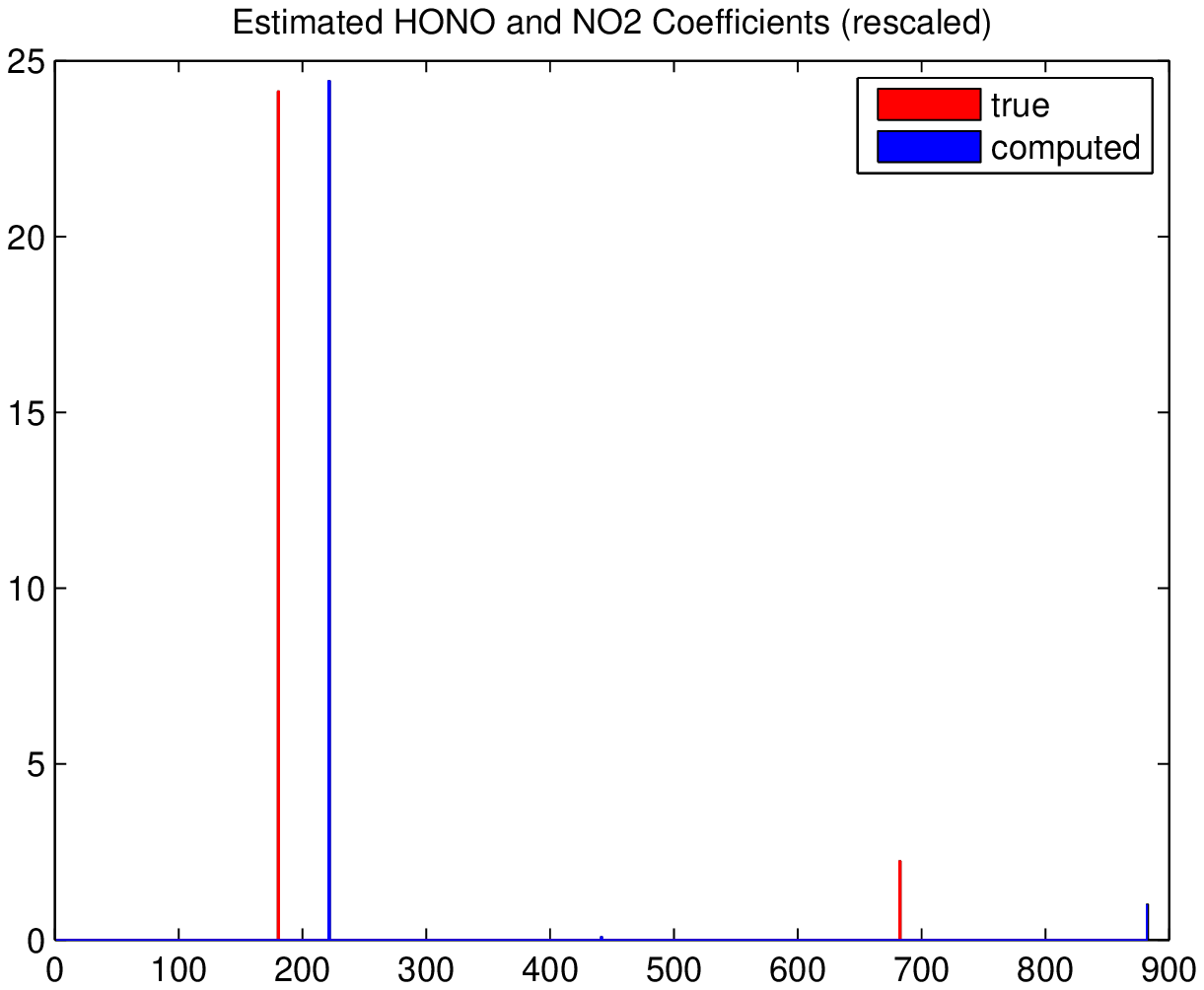}\\
\end{tabular}
\caption{Preliminary results for DOAS with background using $l_1$/$l_2$ sparsity penalty} \label{fig:DOAS_BG}
\end{center}
\end{figure}
}

\subsection{Hyperspectral Demixing with Inter Sparsity Penalty}
We use the urban hyperspectral dataset from \cite{urbdata}.  Each column of the data matrix $Y \in \R^{187 \times 94249}$ represents the spectral signature measured at a pixel in the 307 by 307 urban image shown in Figure \ref{urbanpic}.
\begin{figure}
\begin{center}
\begin{tabular}{cc}
\includegraphics[width=0.4\textwidth]{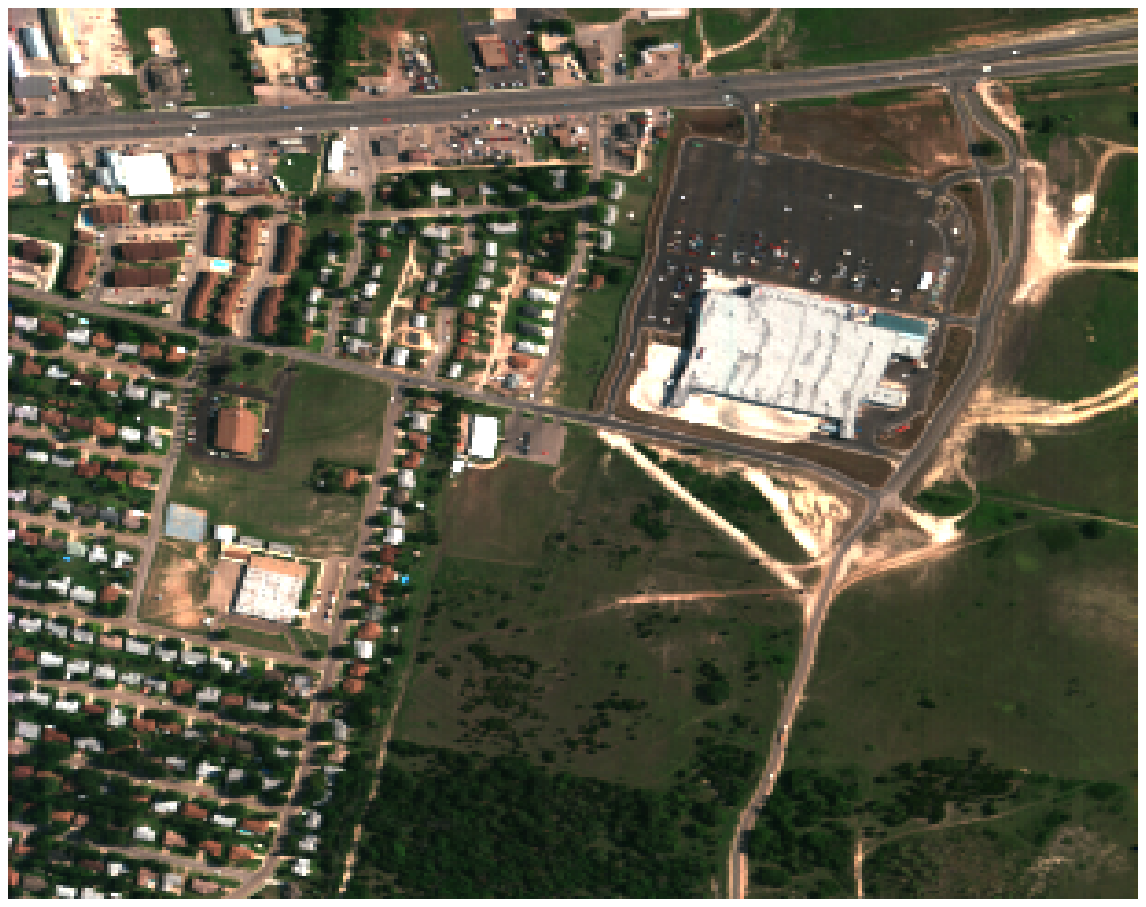}&
\includegraphics[width=0.4\textwidth]{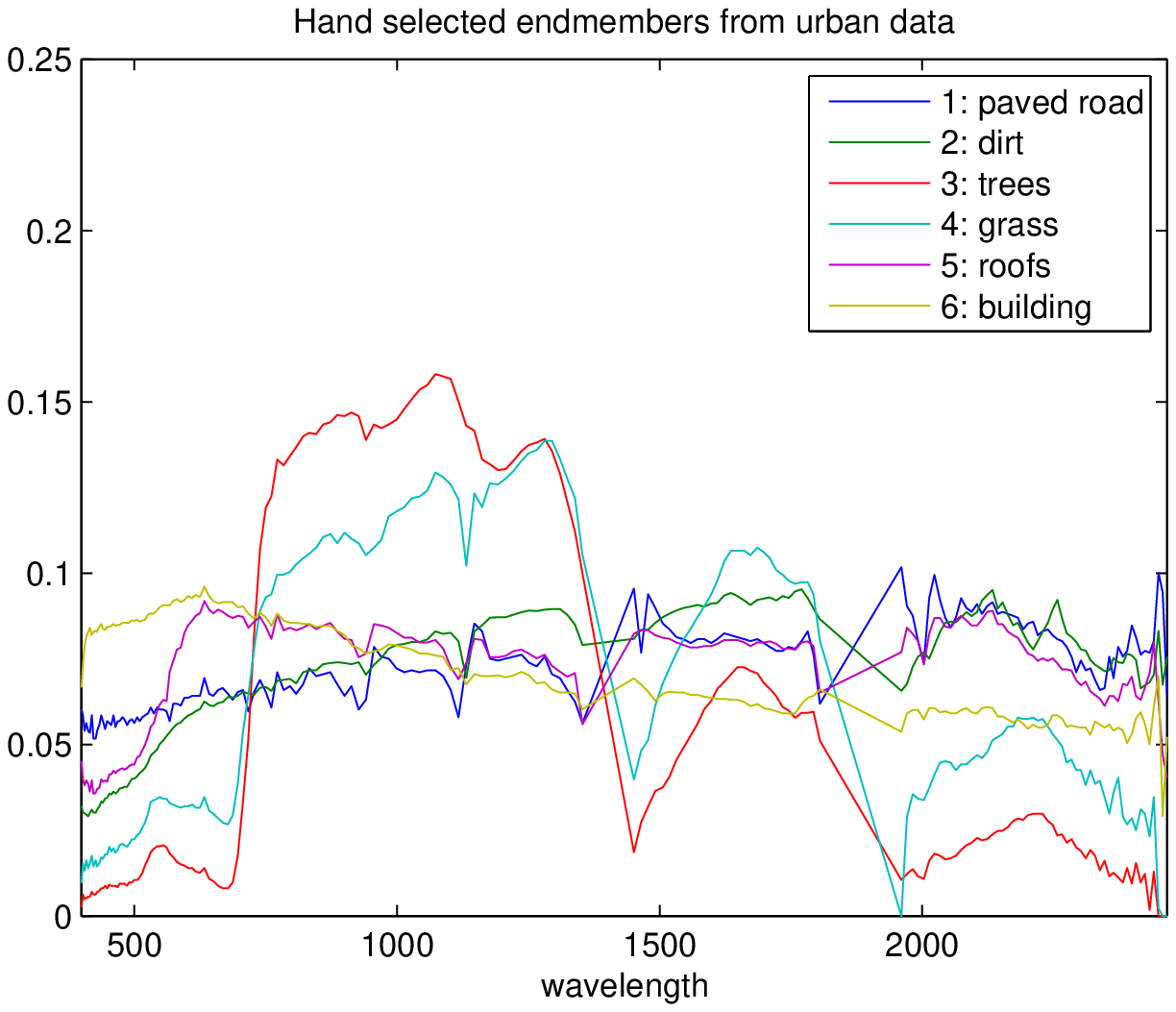}\\
\end{tabular}
\caption{Color visualization of urban hyperspectral image and hand selected endmembers } \label{urbanpic}
\end{center}
\end{figure}

The data was processed to remove some wavelengths for which the data was corrupted, resulting in a spectral resolution reduced from 210 to 187.  The six endmembers forming the columns of the dictionary $A$ were selected by hand from pixels that appeared to be pure materials.  These are also shown in Figure \ref{urbanpic}.  The columns of both $A$ and $Y$ were normalized to have unit $l_2$ norm.

Algorithms \ref{SGPalg_dynamic} and \ref{SGPalg} were used to solve (\ref{eq:basic_demix}) with $l_1$/$l_2$ and regularized $l_1$ - $l_2$ inter sparsity penalties respectively.  These were compared to NNLS and $l_1$ minimization \cite{SGO}, which solve
\begin{equation}\label{eq:l1demix} \min_{x_p \geq 0} \frac{1}{2}\|Ax_p - b_p\|^2 + \gamma \|x_p\|_1 \end{equation}
for each pixel $p$.  The parameters were chosen so that the $l_1$, $l_1$/$l_2$ and $l_1$ - $l_2$ approaches all achieved roughly the same level of sparsity, measured as the fraction of nonzero abundances.  The sparsity and sum of squares errors achieved by the four models are tabulated in Table \ref{density_sos}.
\begin{table}
\begin{center}
\begin{tabular}{|l|c|c|c|c|}
\hline
& NNLS & $l_1$ & $l_1$/$l_2$ & $l_1$ - $l_2$ \\
\hline
Fraction nonzero & 0.4752 & 0.2683 & 0.2645 & 0.2677 \\
Sum of squares error& 1111.2 & 19107 & 1395.3 & 1335.6 \\
\hline
\end{tabular}
\caption{Fraction of nonzero abundances and sum of squares error for four demixing models} \label{density_sos}
\end{center}
\end{table}
The $l_1$ penalty promotes sparse solutions by trying to move coefficient vectors perpendicular to the positive face of the $l_1$ ball, shrinking the magnitudes of all elements.  The $l_1$/$l_2$ penalty, and to some extent $l_1$ - $l_2$, promote sparsity by trying to move in a different direction, tangent to the $l_2$ ball.  They do a better job of preserving the magnitudes of the abundances while enforcing a similarly sparse solution.  This is reflected in their lower sum of squares errors.

The results of these demixing algorithms are also represented in Figure \ref{fraction_planes} as fraction planes, which are the rows of the abundance matrix visualized as images.  They show the spatial abundance of each endmember.
\begin{figure}
\begin{center}
\begin{tabular}{cc}
NNLS & $l_1$ \\
\includegraphics[width=0.5\textwidth]{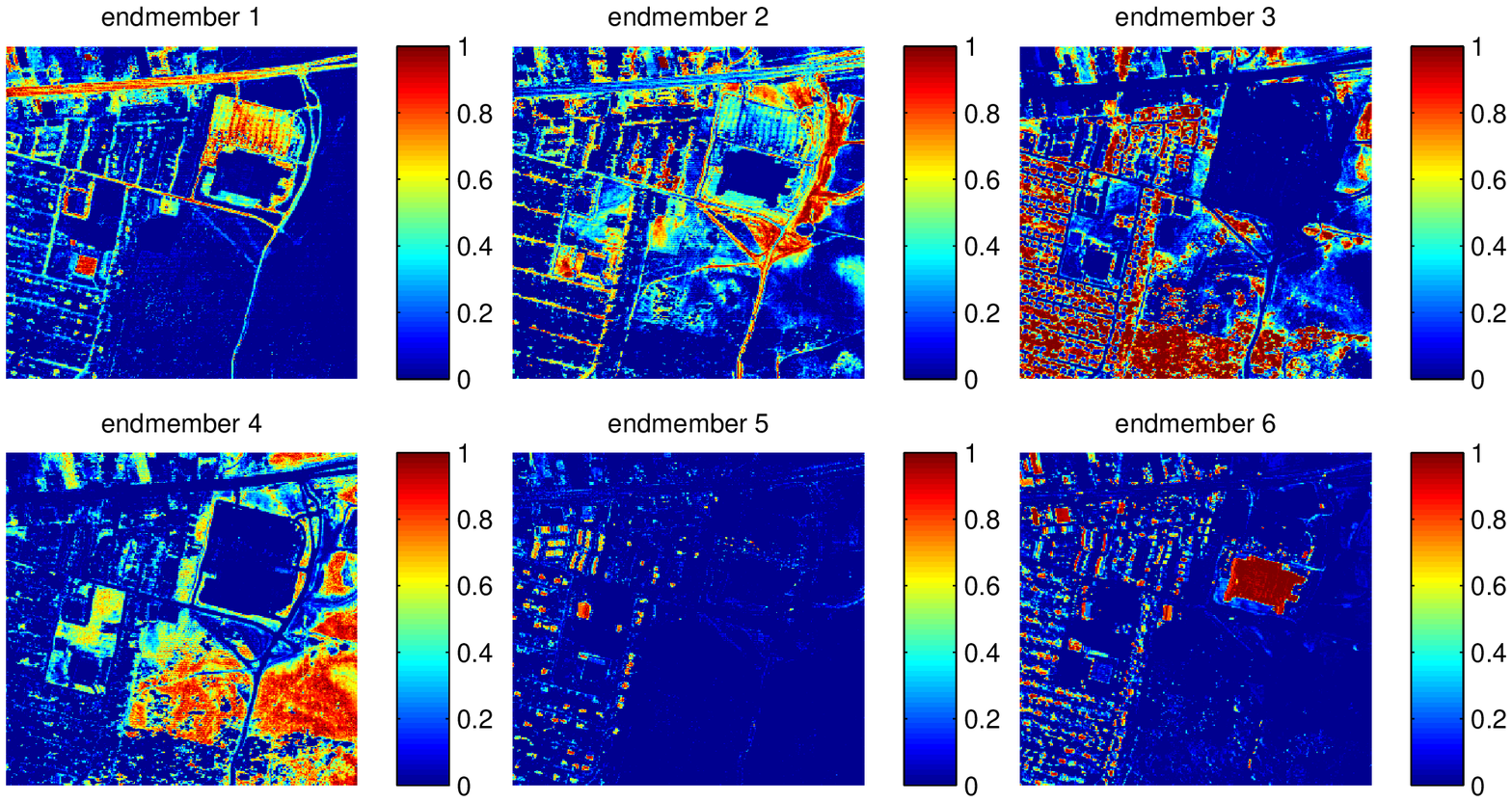}&
\includegraphics[width=0.5\textwidth]{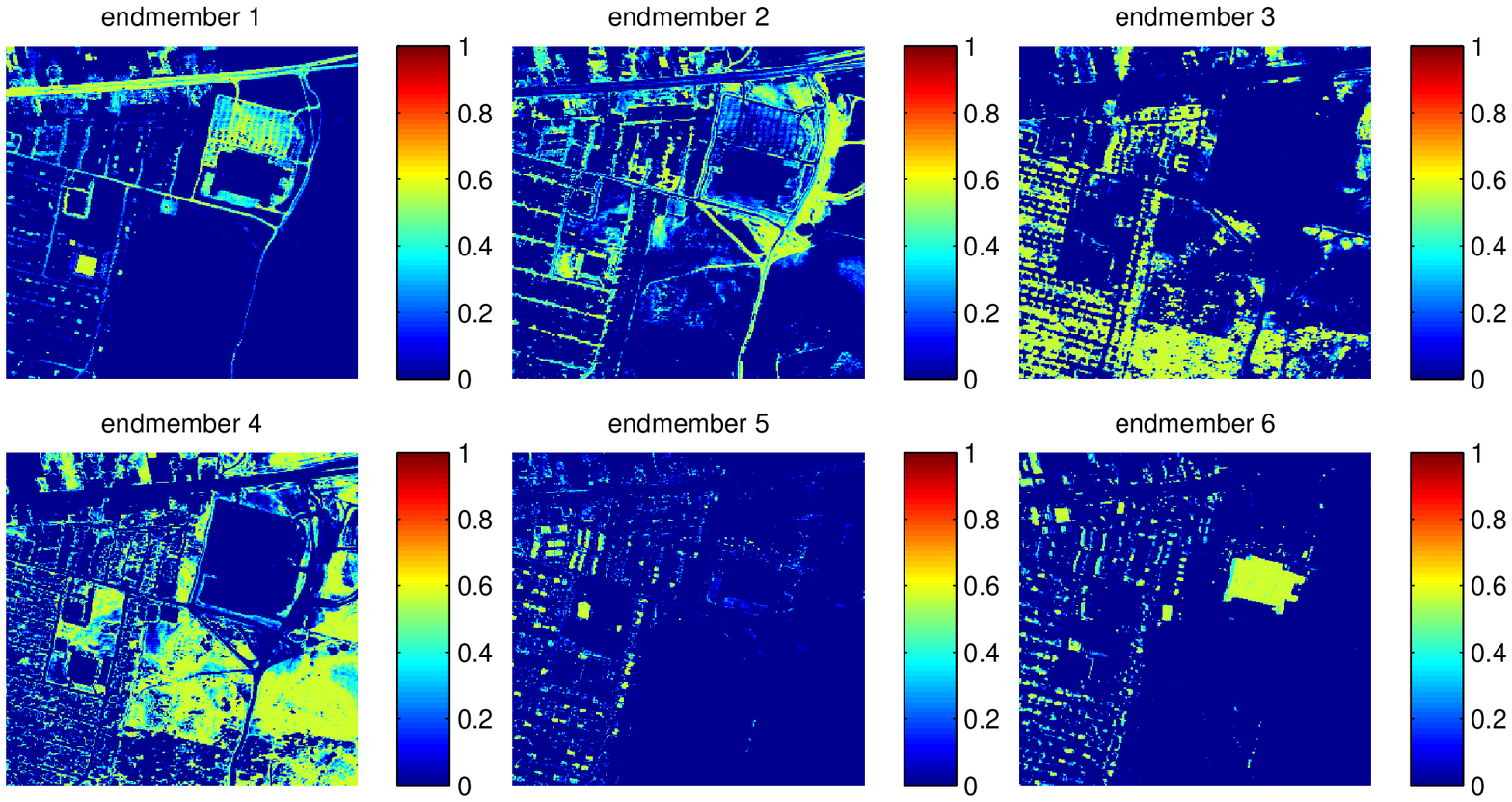}\\
$l_1$/$l_2$ & $l_1$ - $l_2$ \\
\includegraphics[width=0.5\textwidth]{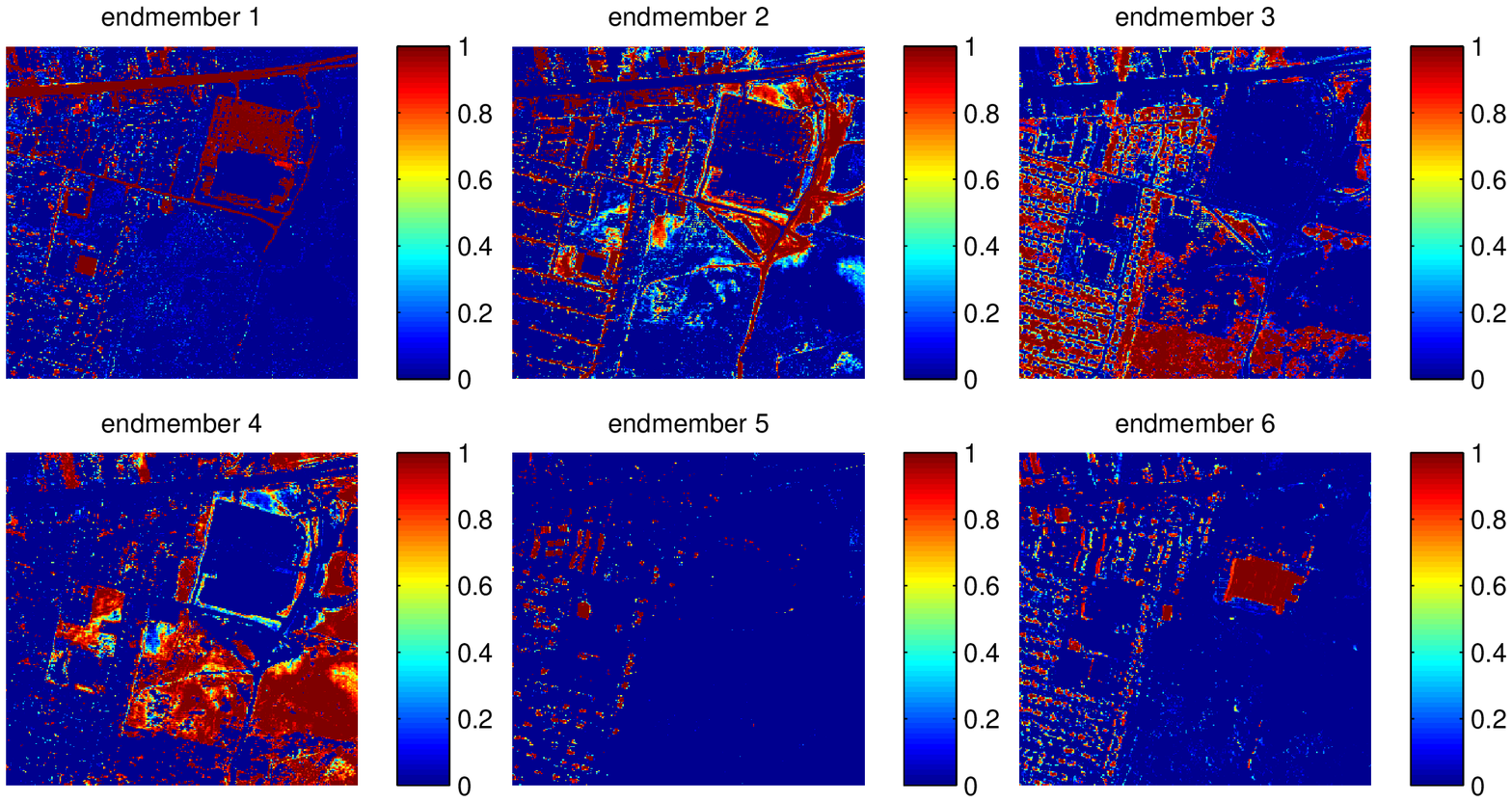}&
\includegraphics[width=0.5\textwidth]{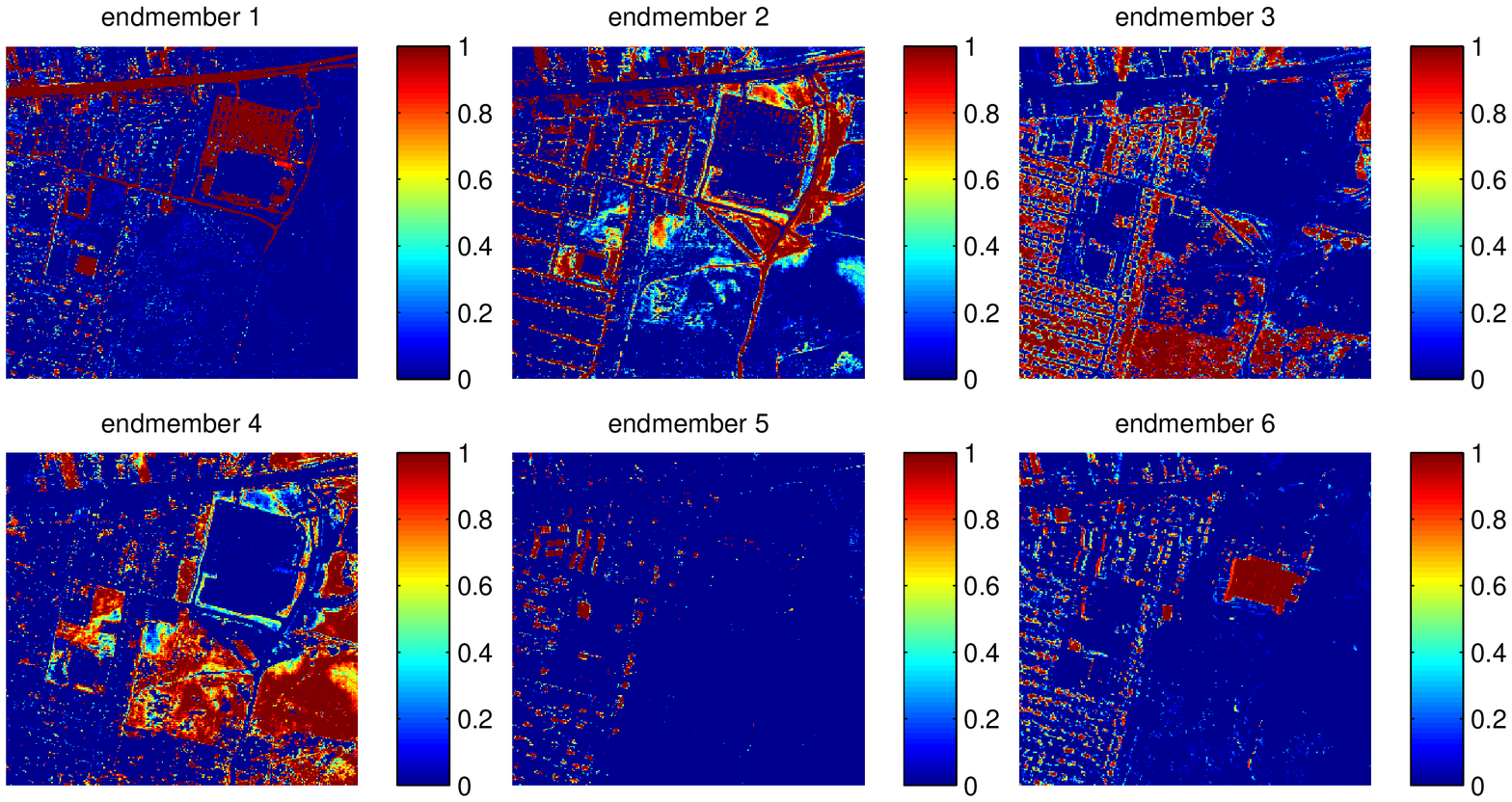}\\
\end{tabular}
\caption{Estimated fraction planes for urban data using hand selected endmembers } \label{fraction_planes}
\end{center}
\end{figure}

\subsection{Hyperspectral Demixing with Intra and Inter Sparsity Penalties}
In this Section we consider a hyperspectral demixing example with an expanded dictionary consisting of groups of references, each group consisting of candidate endmembers for a particular material.  The data we use for this examples is from \cite{salinasdata} and consists of a 204 band hyperspectral image of crops, soils and vineyards in Salinas Valley, California. Using a given ground truth labeling, we extract just the data corresponding to romaine lettuce at 4, 5, 6 and 7 weeks respectively.  For each of these four groups, we remove outliers and then randomly extract 100 representative signatures.  These and their normalized averages are plotted in Figure \ref{lettuce_100} and give a sense of the variability of the signatures corresponding to a particular label.

By concatenating the four groups of 100 signatures we construct a dictionary $A_{\text{group}} \in \R^{204 \times 400}$.  We also construct two smaller dictionaries $A_{\text{mean}} \text{ and } A_{\text{bad}} \in \R^{204 \times 4}$.  The columns of $A_{\text{mean}}$ are the average spectral signatures shown in red in Figure \ref{lettuce_100} and the columns of $A_{\text{bad}}$ are the candidate signatures farthest from the average shown in green in Figure \ref{lettuce_100}.

\begin{figure}
\begin{center}
\includegraphics[width=.9\textwidth]{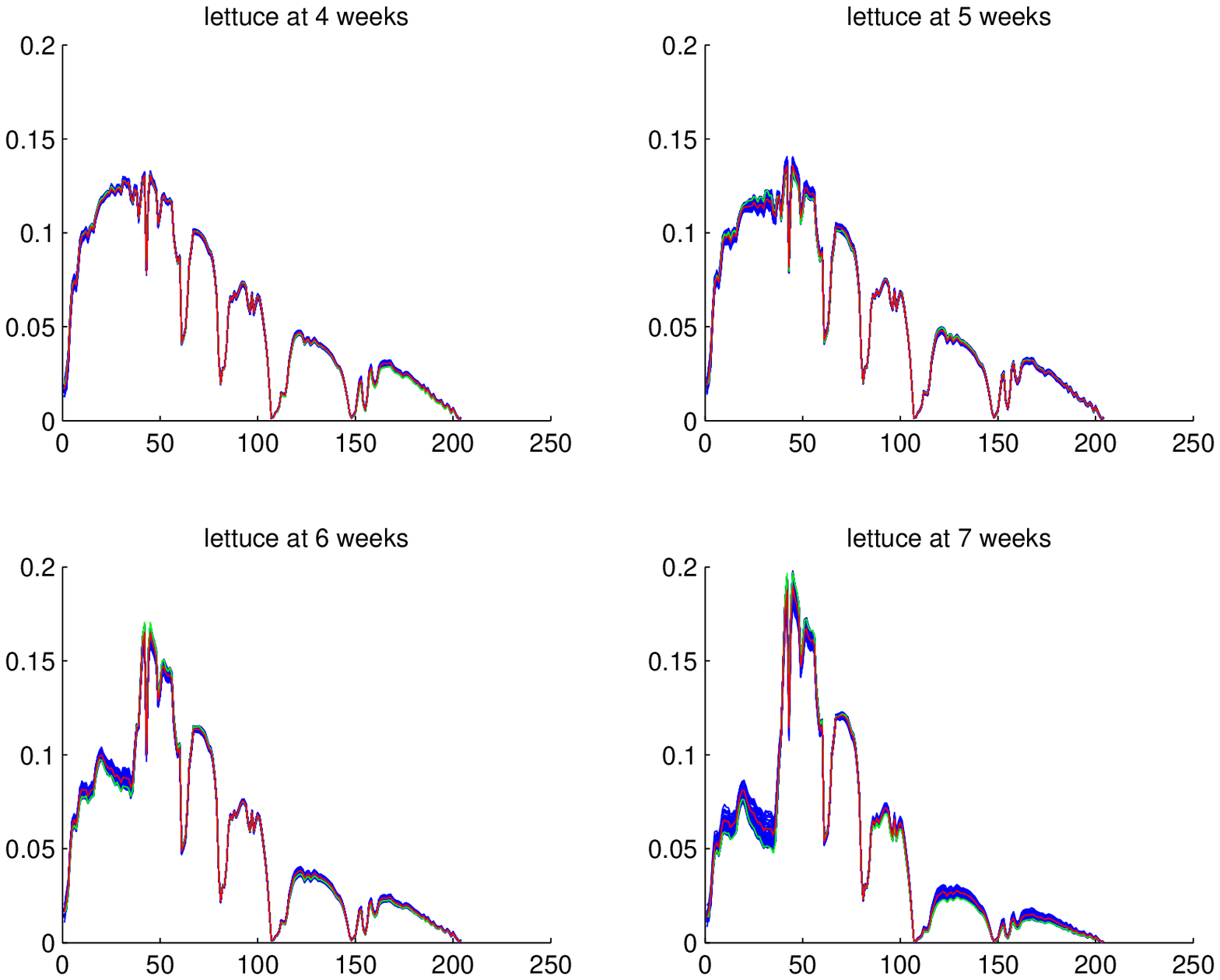} \\
\caption{Candidate endmembers (blue) for romaine lettuce at 4,5,6 and 7 weeks from Salinas dataset, normalized averages (red) and candidate endmembers farthest from the average (green) } \label{lettuce_100}
\end{center}
\end{figure}

Synthetic data $b \in \R^{204 \times 1560}$ was constructed by randomly constructing a ground truth abundance matrix $\bar{S}_{\text{group}} \in \R^{400 \times 1560}$ with 1000 1-sparse columns, 500 2-sparse columns, 50 3-sparse columns and 10 4-sparse columns, with each group of 100 coefficients being at most 1-sparse.  Zero mean Gaussian noise $\eta$ with standard deviation .005 was also added so that
\[b = A_{\text{group}}\bar{S}_{\text{group}} + \eta \ . \]
Each k-sparse abundance column was constructed by first randomly choosing k groups, then randomly choosing one element within each of the selected groups and assigning a random magnitude in $[0,1]$.  The generated columns were then rescaled so that the columns of the noise free data matrix would have unit $l_2$ norm.

Define $T \in \R^{4 \times 400}$ to be a block diagonal matrix with 1 by 100 row vectors of ones as the blocks.  
\[ T = \bbm 1 \cdots 1 & & & \\  & 1 \cdots 1 & & \\ & & 1 \cdots 1 & \\ & & & 1 \cdots 1 \ebm \ . \]
Applying $T$ to $\bar{S}_{\text{group}}$ lets us construct a ground truth group abundance matrix $\bar{S} \in \R^{4 \times 1560}$ by summing the adundances within groups.  For comparison purposes, this will allow us to apply different demixing methods using the different sized dictionaries $A_{\text{mean}}$, $A_{\text{group}}$ and $A_{\text{bad}}$ to compute $S_{\text{mean}}$, $TS_{\text{group}}$ and $S_{\text{bad}}$ respectively, which can all then be compared to $\bar{S}$.

We compare six different demixing methods using the three dictionaries:
\ben
\item NNLS (\ref{eq:NNLS}) using $A_{\text{mean}}$, $A_{\text{group}}$ and $A_{\text{bad}}$
\item $l_1$ (\ref{eq:l1demix}) using $A_{\text{mean}}$, $A_{\text{group}}$ and $A_{\text{bad}}$
\item $l_1$/$l_2$ (Problem 1) inter sparsity only, using $A_{\text{mean}}$ and $A_{\text{bad}}$
\item $l_1$ - $l_2$ (Problem 2) inter sparsity only, using $A_{\text{mean}}$ and $A_{\text{bad}}$
\item $l_1$/$l_2$ intra and inter sparsity, using $A_{\text{group}}$
\item $l_1$ - $l_2$ intra and inter sparsity, using $A_{\text{group}}$
\een
For $l_1$ demixing, we set $\gamma = .1$ for $A_{\text{mean}}$ and $A_{\text{bad}}$ and $\gamma = .001$ for $A_{\text{group}}$.  In all applications of Algorithms \ref{SGPalg_dynamic} and \ref{SGPalg}, we use a constant but nonzero initialization and set $\epsilon_j =.01$, $\gamma_0 = .01$ and $C = 10^{-9} \I$.  For the applications with intra sparsity penalties, $\gamma_j = .0001$ for $j = 1,2,3,4$.  Otherwise $\gamma_j = 0$.  For Algorithm \ref{SGPalg_dynamic}, we again use $\sigma = .1$, $\xi_1 = 2$ and $\xi_2 = 10$.  We stop iterating when the difference in the objective is less than $.001$.

We compare the computed group abundances to the ground truth $\bar{S}$ in two ways in Table \ref{Sgt}.  Measuring the $l_0$ norm of the difference of abundance matrices indicates how accurately the sparsity pattern was estimated.  For each material, we also compute the absolute value of each group abundance error averaged over all measurements.  For visualization, we plot the computed number of nonzero entries versus the ground truth for each column of the group abundances in Figure \ref{nnz_100}.

\begin{table}
\begin{center}
\begin{tabular}{|c|c|c|c|c|}
\hline
\ & NNLS & $l_1$ & $l_1$/$l_2$ & $l_1$ - $l_2$ \\
\hline
$\|S_{\text{mean}}-\bar{S}\|_0$ & 1537 & 935 & 786 & 784  \\
$E^{\text{mean}}_1$ & 0.0745 & 0.1355 & 0.0599 & 0.0591  \\
$E^{\text{mean}}_2$ & 0.0981 & 0.1418 & 0.0729 & 0.0722  \\
$E^{\text{mean}}_3$ & 0.0945 & 0.1627 & 0.0865 & 0.0868  \\
$E^{\text{mean}}_4$ & 0.0542 & 0.1293 & 0.0514 & 0.0492  \\
\hline
$\|TS_{\text{group}}-\bar{S}\|_0$ & 1814 & 851 & 889 & 851  \\
$E^{\text{group}}_1$ & 0.0624 & 0.1280 & 0.0717 & 0.0691  \\
$E^{\text{group}}_2$ & 0.0926 & 0.1300 & 0.0877 & 0.0782  \\
$E^{\text{group}}_3$ & 0.1066 & 0.1625 & 0.1147 & 0.1049  \\
$E^{\text{group}}_4$ & 0.0618 & 0.1249 & 0.0681 & 0.0621  \\
\hline
$\|S_{\text{bad}}-\bar{S}\|_0$ & 2123 & 1093 & 1134 & 1076 \\
$E^{\text{bad}}_1$ & 0.0804 & 0.1391 & 0.0666 & 0.0633  \\
$E^{\text{bad}}_2$ & 0.1410 & 0.1353 & 0.0900 & 0.0768  \\
$E^{\text{bad}}_3$ & 0.1400 & 0.1733 & 0.1000 & 0.1046  \\
$E^{\text{bad}}_4$ & 0.0759 & 0.1540 & 0.0646 & 0.0713  \\
\hline
\end{tabular}
\caption{Errors between computed group abundance and ground truth $\bar{S}$, where $E^{\text{mean}}_j = \frac{1}{P} \sum_{p=1}^P |S_{\text{mean}}(j,p)-\bar{S}(j,p)|$,
$E^{\text{group}}_j = \frac{1}{P} \sum_{p=1}^P |(TS_{\text{group}})(j,p)-\bar{S}(j,p)|$,
$E^{\text{bad}}_j = \frac{1}{P} \sum_{p=1}^P |S_{\text{bad}}(j,p)-\bar{S}(j,p)|$, } \label{Sgt}
\end{center}
\end{table}

\begin{figure}
\begin{center}
\includegraphics[width=\textwidth]{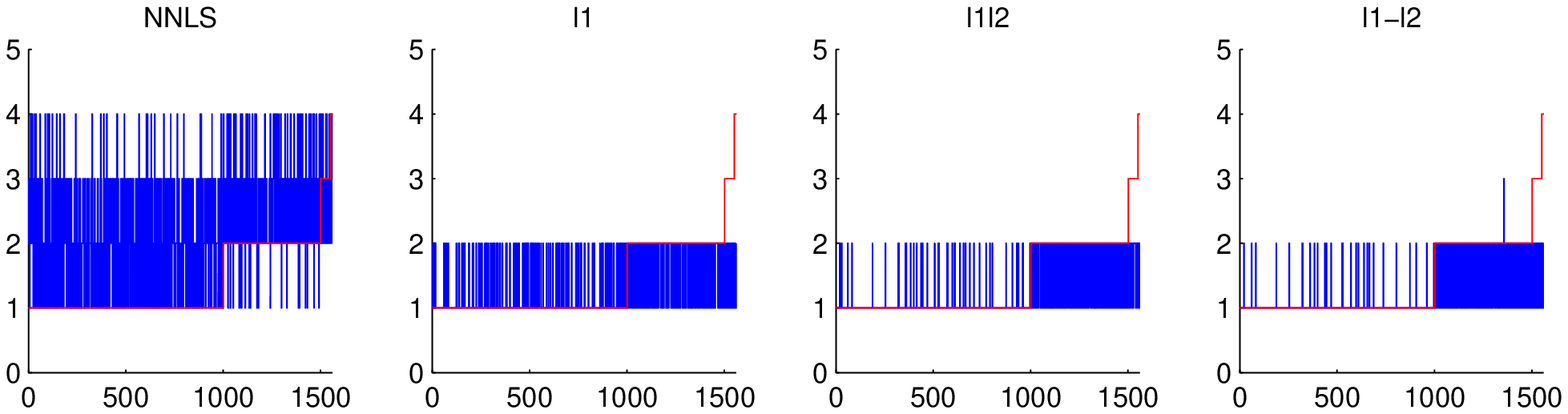} \\
\includegraphics[width=\textwidth]{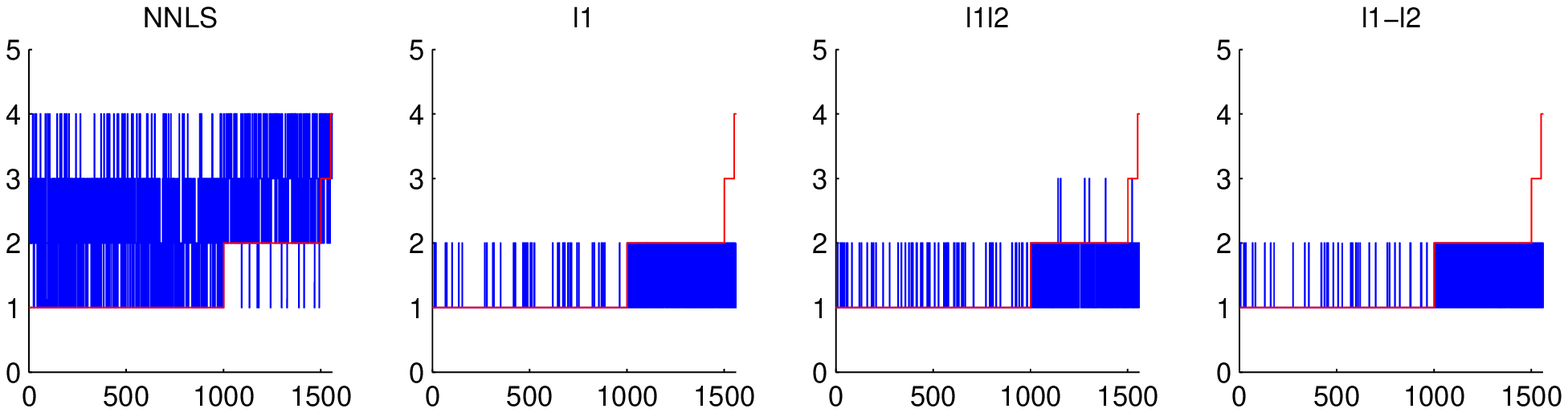} \\
\includegraphics[width=\textwidth]{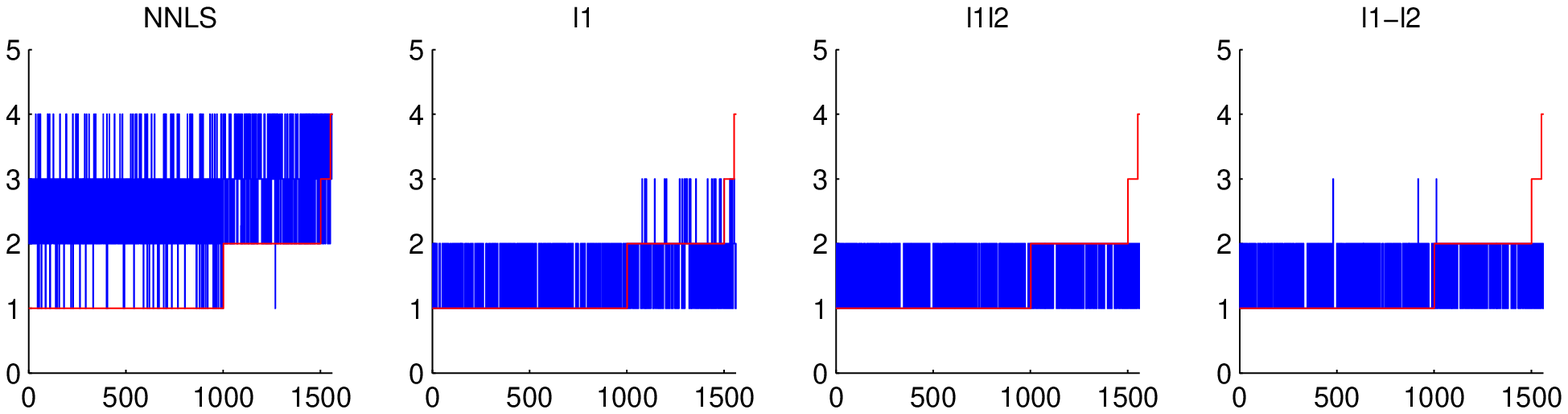} \\
\caption{Estimated number of nonzero entries in each abundance column (blue) and ground truth (red). \ Row 1: $S_{\text{mean}}$. \ Row 2: $TS_{\text{group}}$. \ Row 3: $S_{\text{bad}}$. } \label{nnz_100}
\end{center}
\end{figure}

We see in Table \ref{Sgt} and Figure \ref{nnz_100} that NNLS did a poor job at finding sparse solutions although average coefficient errors were low.  On the other hand, $l_1$ minimization did a good job of finding a sparse solution, but coefficient errors were higher because the abundance magnitudes were underestimated.  The $\frac{l_1}{l_2}$ and $l_1 - l_2$ minimization approaches were better at encouraging sparse solutions while maintaining small average errors in the abundance coefficients.

For this example, the average signatures used in $A_{\text{mean}}$ turned out to be good choices for the endmembers, and we didn't see any improvement in the estimated group abundances by considering the expanded dictionary $A_{\text{group}}$.  However, compared to using the four poorly selected endmember candidates in $A_{\text{bad}}$, we got better results with the expanded dictionary.  In the expanded dictionary case, which resulted in an underdetermined dictionary matrix, the abundances $S_{\text{group}}$ directly computed by $l_1$ minimization were much less sparse than those computed by $\frac{l_1}{l_2}$ and $l_1 - l_2$ minimization.  This is because $\frac{l_1}{l_2}$ and $l_1 - l_2$ minimization were able to enforce 1-sparsity within coefficient groups, but $l_1$ was not.  If the group 1-sparsity requirement is important for the model to be accurate, then this is an advantage of using the $\frac{l_1}{l_2}$ and $l_1 - l_2$ penalties.  Here, this difference in sparsity turned out not to have much effect on the group abundances $TS_{\text{group}}$, which were computed by summing the abundances within each group. This may not hold in situations where the endmember variability is more nonlinear.  For example, if the endmember variability had to do with misalignment, as with the earlier DOAS example, then linear combinations of misaligned signatures would not produce a good reference signature.



\section{Conclusions and Future Work \label{conclusions}}
We proposed a method for linear demixing problems where the dictionary contains multiple references for each material and we want to collaboratively choose the best one for each material present.  More generally, we showed how to use $\frac{l_1}{l_2}$ and $l_1 - l_2$ penalties to obtain structured sparse solutions to non-negative least squares problems.  These were reformulated as constrained minimization problems with differentiable but non-convex objectives.  A scaled gradient projection method based on difference of convex programming was proposed.  This approach requires solving a sequence of strongly quadratic programs, and we showed how these can be efficiently solved using the alternating direction method of multipliers.  Moreover, few iterations were required in practice, between 4 and 20 for all the numerical examples presented in this paper.  Some convergence analysis was also presented to show that limit points of the iterates are stationary points.  Numerical results for demixing problems in differential optical absorption spectroscopy and hyperspectral image analysis show that our difference of convex approach using $\frac{l_1}{l_2}$ and $l_1 - l_2$ penalties is capable of promoting different levels of sparsity on possibly overlapping subsets of the fitting or abundance coefficients.  

For future work we would like to test this method on more general multiple choice quadratic knapsack problems, which are related to the applications presented here that focused on finding solutions that were at most 1-sparse within specified groups.  It would be interesting to see how this variational approach performs relative to combinatorial optimization strategies for similar problems.  We are also interested in exploring alternative sparsity penalties that can be adapted to the data set.  When promoting 1-sparse solutions, the experiments in this paper used fixed sparsity parameters that were simply chosen to be sufficiently large.  We are interested in justifying the technique of gradually increasing this parameter while iterating, which empirically seems better able to avoid bad local minima.  The applications presented here all involved uncertainty in the dictionary, which was expanded to include multiple candidate references for each material.  If a-priori assumptions are available about the relative likelihood of these candidates, we would like to incorporate this into the model. \\


\noindent
{\bf Acknowledgments - }
We thank Lisa Wingen for providing DOAS references and data, which we used as a guide when generating synthetic data for some of our numerical examples.  We also thank John Greer for pointing out a paper by A. Zare and P. Gader \cite{ZG}. 

\bibliographystyle{IEEEbib}
\clearpage
\bibliography{sparse_nnls_refs}

\end{document}